\documentclass[runningheads]{llncs}
\usepackage[T1]{fontenc}
\usepackage{graphicx}
\usepackage{breakcites}
\usepackage[colorlinks=true,breaklinks=true]{hyperref}
\usepackage{url}
\usepackage{color}

\usepackage{booktabs}   %
\usepackage{subcaption} %
\usepackage{listings}
\usepackage{relsize}
\usepackage{enumerate}
\usepackage{float}
\usepackage{nicefrac}
\usepackage{multirow}
\usepackage{microtype}
\usepackage{wrapfig}
\usepackage{amsmath}
\usepackage{amssymb}
\usepackage{mathtools}
\usepackage{xcolor}

\usepackage{wrapfig}
\usepackage{algorithmic}
\usepackage[framemethod=tikz]{mdframed}
\usepackage{gensymb}
\usepackage[tableposition=below]{caption}
\captionsetup[table]{skip=5pt}
\usepackage[algosection,ruled,vlined,linesnumbered]{algorithm2e}
  \SetKwInput{KwIn}{Inputs}
  \SetKwFunction{Frefine}{Refine}
  \SetKwFunction{Ffilter}{Filter}
  \SetKwFunction{Fviz}{Visualize}
  \SetKwFunction{Fverify}{Verify}
  \SetKwFunction{Fpre}{GetPrecondition}
  \SetKwFunction{Fexplain}{Explain}
  \SetKwFunction{Foracle}{Oracle}
  \SetKwFunction{Fwp}{WeakestPre}
  \SetKwFunction{Fcomp}{Compare}
  \SetKwProg{Fn}{}{:}{}
  \DontPrintSemicolon
\usepackage{stmaryrd}
\usepackage[subtle,paragraphs=normal]{savetrees}

\DeclareMathSymbol{\mlq}{\mathord}{operators}{``}
\DeclareMathSymbol{\mrq}{\mathord}{operators}{`'}
\DeclareMathOperator*{\argmin}{arg\,min}
\renewcommand{\qed}{\hfill\blacksquare}

\newcommand{\conspec}{\texttt{Con}_{\texttt{spec}}}
\newcommand{\strength}{\texttt{>}}

\newcommand{\encode}[1]{#1_{enc}}
\newcommand{\head}[1]{#1_{head}}

\newcommand{\truel}{\texttt{True}}
\newcommand{\falsel}{\texttt{False}}
\newcommand{\concl}{hasCon}
\newcommand{\outl}{predict}

\newcommand{\labell}{c}
\newcommand{\Labels}{C}
\newcommand{\Concepts}{Concepts}
\newcommand{\Vars}{Vars}
\newcommand{\repmap}{r_{map}}

\newcommand{\sem}[1]{{\llbracket #1 \rrbracket}}
\newcommand{\cosine}{cos}

\newcommand{\crep}{rep}
\newcommand{\CRep}{Rep}
\newcommand{\fun}{\rightarrow}
\newcommand{\argmax}{argmax}

\newcommand{\condir}[1]{\overline{#1}}
\newcommand{\Caption}{cap}

\newcommand{\Dtrain}{D_{\text{train}}}
\newcommand{\Dtest}{D_{\text{test}}}
\begin{document}
\title{Concept-based Analysis of Neural Networks via Vision-Language Models}

\newcommand{\susmit}[1]{\textcolor{blue}{#1}}
\newcommand{\nina}[1]{\textcolor{orange}{nina: #1}}
\newcommand{\roy}[1]{\textcolor{purple}{Anirban: #1}}

\author{
Ravi~Mangal\inst{1}
\and
Nina~Narodytska\inst{2}
\and
Divya~Gopinath\inst{3}
\and
Boyue~Caroline~Hu\inst{4}
\and
Anirban~Roy\inst{5}
\and
Susmit~Jha\inst{5}
\and
Corina~P\u{a}s\u{a}reanu\inst{1,3}
}
\institute{Carnegie Mellon University\\
\and VMware Research\\
\and NASA Ames\\
\and University of Toronto \\
\and SRI International
}

\maketitle              %
\begin{abstract}
The analysis of vision-based deep neural networks (DNNs) is highly desirable but it is very challenging due to the difficulty of expressing formal specifications for vision tasks and the lack of efficient verification procedures. In this paper, we propose to leverage emerging multimodal, vision-language, foundation models (VLMs) as a lens through which we can reason about vision models. VLMs have been trained on a large body of images accompanied by their textual description, and are thus implicitly aware of high-level, human-understandable concepts describing the images. We describe a logical specification language $\conspec$ designed to facilitate writing specifications in terms of these concepts. To define and formally check $\conspec$ specifications, we build a map between the internal representations of a given vision model and a VLM,  leading to an efficient verification procedure of natural-language properties for vision models.
We demonstrate our techniques on a ResNet-based  classifier trained on the RIVAL-10 dataset using CLIP as the multimodal model.
\end{abstract}

\setcounter{footnote}{0} \section{Introduction}
\label{sec:intro}

Deep neural networks (DNNs) are increasingly used in safety-critical systems as perception components processing high-dimensional image data~\cite{esteva2021deep,janai2020computer,kaufmann2023champion,beland2020towards}. The analysis of these networks is highly desirable but it is very challenging due to the difficulty of expressing formal specifications about vision-based DNNs. This in turn is due to the low-level nature of pixel-based input representations that these models operate on and to the fact that DNNs are also notoriously opaque---their internal computational structures remain largely uninterpreted. There are also serious scalability issues that impede formal, exhaustive verification: these networks are very large, with thousands or million of parameters, making the verification problem very complex.

To address these serious challenges, our main idea is to leverage emerging multimodal, vision-language, foundation models (VLMs) such as CLIP~\cite{pmlr-v139-radford21a} as a {\em lens} through which we can reason about vision models. VLMs can process and generate both textual and visual information as they are trained for telling how well a given image and a given text caption fit together.  We believe that VLMs offer an exciting opportunity for the formal analysis of vision models, as they enable the use of natural language for probing and reasoning about visual data.

\begin{figure*}
\centering
\includegraphics[scale=0.25]{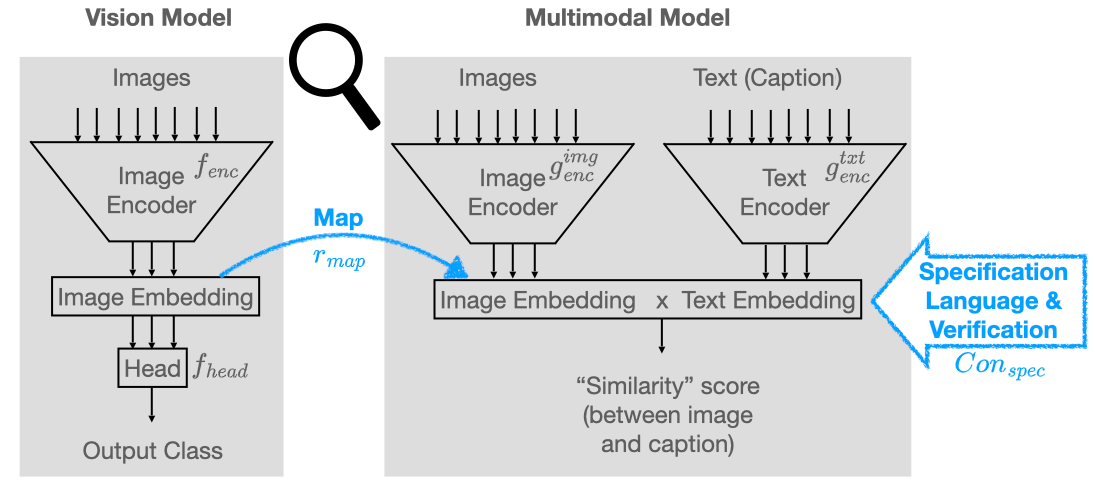}
\caption{Overview of Approach 
}
\label{fig:framework}
\end{figure*}

We illustrate our approach through Fig.~\ref{fig:framework}. 
The vision model under analysis is depicted on the left-hand side. It is an image classifier that takes as input images and it produces classes. The DNN can be seen as the composition of an {\em encoder} responsible for translating low-level inputs (pixels composing images) into high-level representations and a {\em head} that makes predictions based on these representations.  The VLM is depicted on the right-hand side. It consists of two encoders, one for each modality (text vs. image), that map inputs to the same representation space. The output of the VLM is a {\em similarity score}\footnote{Measured using cosine similarity for CLIP.} between the image and the text. To reason about the vision model, we describe a {\em logical specification language} $\conspec$ in terms of natural-language, human-understandable {\em concepts}.  For instance, for a vision model tasked with distinguishing between given classes like \textit{cat, dog, bird, car,} and \textit{truck}, relevant concepts could be \textit{metallic, ears,} and  \textit{wheels}. Domain experts and model developers can then express DNN specifications using such concepts. 
A key ingredient of our language is a strength predicate (denoted by $\strength$) defined between two concepts, which is meant to capture order relationships between concepts for a given class. For instance, if the output of the model is class \textit{truck}, we expect that the model to be more aware of concepts \textit{metallic} and \textit{wheels} rather than \textit{ears}, written as {\em metallic $\strength$ ears} and {\em wheels $\strength$ ears}, as {\em ears} are typically associated with an animal.

To define a semantics for this specification language, we could leverage previous work on {\em concept representation analysis}~\cite{Zhou_2018_ECCV,pmlr-v80-kim18d,DBLP:series/faia/YehKR21} which aims to extract meaningful concept representations, often in the form of {\em directions} (of vectors), from the latent spaces of neural networks, that are easy to understand and process. In practice,
inferring the concept representations learned by a DNN can be challenging, as finding the direction corresponding to a concept requires data that is manually annotated with truth
values of concept predicates. It can also be the case that the concepts are entangled and that no such directions exist.  Instead we propose the use of foundation vision-language models (VLMs),  as tools for concept-based analysis of vision-based DNNs. As VLMs have been trained on a large body of images accompanied by their textual description, they are implicitly aware of high-level concepts describing the images. 

We leverage recent work~\cite{moayeri2023text} to build an {\em affine map} between the image embedding space of the vision-based DNN and the corresponding image embedding space of the VLM. As a consequence, the image representation space of a VLM can serve as a proxy for the representation space of a separate vision-based DNN and can answer $\conspec$ queries about vision-based DNNs, which we encode via the textual embeddings.  %
Checking that an image satisfies certain properties reduces to checking similarity between the image representations and (logical combinations) of $\conspec$ predicates encoded in the textual space. Importantly, the verification is performed in the common text/image representation space
and it is thus {\em scalable}. The same techniques can be applied to the VLM itself (sans the map), to check properties of image representations with respect to natural-language properties, encoded in textual representations.

To demonstrate our techniques for analyzing vision-based DNNs, we consider a performant and complex ResNet-based classifier trained on the RIVAL10 dataset~\cite{moayeri2022comprehensive}. RIVAL10 is a subset of ImageNet~\cite{imagenet}, restricted to 10 classes. 
Importantly, RIVAL10 images come with annotated attributes 
which we leverage to define candidate properties to verify. In practice, we expect domain experts to provide the properties, similar to all work on verification. We use the popular vision-language CLIP model as the VLM in our analysis.

\section{Preliminaries}
\label{sec:prelim}

\paragraph{\textbf{Neural Network Classifiers.}}
A neural network classifier $f:X\rightarrow Y$ is a function where $X$ is typically a high-dimensional space $\mathbb{R}^d$ over real vectors and $Y$ is $\mathbb{R}^{|\Labels|}$ where $\Labels$ is a discrete set of labels or classes. For classification, the output
defines a score (or probability) across $|\Labels|$ classes, and the class with the highest score is output as the prediction.
We follow previous work~\cite{pmlr-v80-kim18d,Zhou_2018_ECCV,crabbe2022concept,moayeri2023text} and assume that neural classifiers can be decomposed into an encoder $\encode{f}:X\rightarrow Z$ and a head $\head{f}:Z\rightarrow Y$ where $Z$, typically $\mathbb{R}^{d'}$, is the \emph{representation (or embedding) space} (Fig.~\ref{fig:framework}). The encoder translates low-level inputs (for instance, pixels in an image) into high-level features or representations, and the head then chooses the appropriate label based on these representations. For instance, a convolutional neural network typically comprises of a sequence of convolutional layers followed by fully-connected layers---while convolutional layers (that act as the encoder) are responsible for extracting features from the inputs, the fully-connected layers (that act as the head) are responsible for classification based on the extracted features. Note that the encoder and the head are each further decomposed into a number of linear and non-linear operations (referred to as \emph{layers}). However, the internal structure of encoders and heads are not relevant for the purpose of this paper, so we treat them as black-boxes unless noted otherwise. The  \emph{embedding of an input} $x$ (also referred to as \emph{representation} or \emph{encoding} of an input) is $\encode{f}(x)$.

\paragraph{\textbf{Cosine Similarity.}}
Cosine similarity is a measure of similarity between two non-zero vectors. 
Given two n-dimensional vectors, $a$ and $b$, their cosine similarity is defined as: 
$\cosine(a,b)=\frac{a \cdot b}{||a|| \, ||b||}= \frac{\sum_i a_i b_i}{\sqrt{\sum_i a_i^2} \cdot \sqrt{\sum_i b_i^2}}$
Here $a_i$ and $b_i$ denote the $i^{th}$ components of vectors $a$ and $b$, respectively. The resulting similarity ranges from -1 meaning exactly opposite, to 1 meaning exactly the same, with 0 indicating orthogonality.

\paragraph{\textbf{Vision-Language Models.}} A vision-language model (VLM) $g$ consists of two encoders---an image encoder $\encode{g}^{img}:X\rightarrow Z$ from image inputs to some representation space $Z$ and a text encoder $\encode{g}^{txt}:T \rightarrow Z$ from textual inputs to the same representation space $Z$ (Fig.~\ref{fig:framework})%
VLMs are trained on data consisting of image-caption pairs such that, for each pair, the representation of the image and corresponding caption are as similar as possible (measured via e.g., $\cosine$ similarity) in common %
space $Z$.

VLMs such as CLIP~\cite{pmlr-v139-radford21a} can be used for a number of downstream tasks such as image classification, visual question answering, and image captioning.
By default, CLIP can be used to select the caption (from a set of captions) which has the highest similarity with a given image. This strategy can be leveraged for {\em zero-shot classification} of images~\cite{pmlr-v139-radford21a} (which we use to design $\head{g}$). For instance, given an image from the ImageNet dataset, the label of each ImageNet class, say \textit{truck}, is turned into a caption such as \textit{``An image of a truck''}. Then one can compute the cosine similarity between the embedding of the given image with the text embeddings corresponding to the captions constructed for each class, and pick the class that fits the image the best (i.e., it has the highest similarity score).
Formally, let $x \in X$ be an input image and let $t^{\labell_1}, t^{\labell_2}, ..., t^{\labell_n} \in T$ be the captions or sentences created for the classes $\labell_1, \labell_2, .. \labell_n$ relevant to a classification task. 
The zero-shot classifier returns class $\labell_k$ if and only if $\cosine(\encode{g}^{img}(x),\encode{g}^{txt}(t^{\labell_k}))\geq\cosine(\encode{g}^{img}(x),\encode{g}^{txt}(t^{\labell_k'})), \forall \labell_k'\neq \labell_k.\footnote{Tie-breaks are randomly broken.}$

Previous work indicates that this approach has high zero-shot performance~\cite{pmlr-v139-radford21a}, even though the model might not have been trained on any examples of the classes in the relevant dataset. In our work, we extend this kind of reasoning to arbitrary concepts, i.e., we construct captions about concepts instead of output classes. The same zero-shot classification procedure as before can then be used to decide if a concept is associated with an image. Note that the output classes themselves can be considered as concepts. %

\section{$\conspec$ Specification Language}
\label{sec:spec}

We present a first-order specification language, $\conspec$, that can be used to express concept-based specifications about neural classifiers. Our language makes it possible for developers to express specifications about vision models using human-understandable predicates and have such specifications be checked in an automated fashion.

\subsection{Syntax}
\label{sec:syn}

\begin{figure}[t]
        \[
                \begin{array}{@{}r@{\;}r@{\;}c@{\,}l@{}}
                        \mbox{(variables)} & x &\in& \Vars \\
                        \mbox{(concept names)} & con_1,con_2 &\in& \Concepts \\
                        \mbox{(classes)} & \labell &\in& \Labels 
                \end{array}
        \]
        \[
                \begin{array}{rcl}
                 E  &::=&  
                 \strength(x,con_1,con_2) \mid \outl(x,\labell)
                 \mid \neg E \mid E \wedge E \mid E \vee E 
                \end{array}
        \]
\vspace{-0.5cm}\caption{$\conspec$ syntax}\vspace{-0.3cm}
\label{fig:lang_syn}
\end{figure}

Fig.~\ref{fig:lang_syn} shows the $\conspec$ syntax. 
The set $\Vars$ is the set of all possible variables. 
$\Concepts$ is the set of concept names and $\Labels$ is the set of classification labels. Both $\Concepts$ and $\Labels$ are defined in a task-specific manner. 
The language defines the predicate $\strength(x,con_1,con_2)$, called a {\em strength predicate}, to express concept-based specifications where $con_1,con_2$ are constants from the set $\Concepts$. Note that since $con_1,con_2$ refer to constants, $\strength(x,con_1,con_2)$ is actually a template with a separate predicate defined for every valuation of the constants. $\outl(x,\labell)$ constrains the output of a neural classifier.

We also find it useful to define the predicate $\concl(x,con)$ to encode that input $x$ contains the concept $con$, encoded as follows:
$$
\concl(x,con) := \bigwedge_{con_i \in \Concepts~\wedge~con_i \neq con} \strength(x,con,con_i).
$$
In practice, when defining $\concl$, we typically restrict the set $\Concepts$ to only include concepts irrelevant to $x$.

\paragraph{\textbf{Example.}} Consider the five-way classifier over classes \textit{cat, dog, bird, car,} and \textit{truck}.  We assume that a domain-expert defines the set $\Concepts$. Assume that  $\Concepts$ contains \textit{metallic, ears,} and  \textit{wheels}; this does not necessarily encode  all the possible concepts that are relevant for a task. Using $\conspec$, we can write down the specification $\concl(x,metallic)~ \wedge~ \allowbreak \concl(x,wheels) ~ \wedge ~ \allowbreak \neg \concl(x,ears)$ $ \implies (\outl(x,truck) ~\vee~ \outl(x,car))$ which states that when an input image $x$ contains concepts \textit{metallic} and \textit{ears} but does not contain concept \textit{ears}, amongst the five classes, the classifier should either output \textit{truck} or \textit{car}. We can also state another specification $\outl(x,truck) \implies \strength(x,wheels,ears)$ which ensures that when the model predicts \textit{truck} it ought to be the case that the concept \textit{wheels} is more strongly present in the image than \textit{ears}, i.e., the models predicts \textit{truck} for the ``right'' reasons. Thus, predicates of the form  $\strength(x,wheels,ears)$ can be seen as {\em formal explanations} for the model decision on class \textit{truck}.

\subsection{Semantics}
\label{sec:sem}

\begin{figure}[t]
        \[
                \begin{array}{@{}r@{\;}r@{\;}c@{\,}l@{}}
                        \mbox{($\conspec$ expressions)} & e & \in & E \\
                        \mbox{(classifiers)} & f & \in & F:=\mathbb{R}^{d} \fun \mathbb{R}^{|\Labels|} \\
                        \mbox{(inputs)}&v & \in & X := \mathbb{R}^{d} \\
                        \mbox{(concept representation maps)}&\crep & \in & \CRep := \Concepts \fun  (\mathbb{R}^{d} \fun \mathbb{R}) \\
                        \mbox{(semantics)} & \sem{e} & \in & F \times X \times \CRep \fun \{\truel,\falsel\}\\
                        \end{array}
        \]
        \[
                \begin{array}{@{}r@{\;}c@{\,}l@{}}
                        \sem{\strength(x,con_1,con_2)}(f,v,\crep) & := & \crep(con_1)(v) >  \crep(con_2)(v) \\  
                        \sem{\outl(x,\labell)}(f,v,\crep) & := & (\argmax(f(v))=\{c\})\\
                        \sem{\neg e}(f,v,\crep) & := & \neg \sem{e}(f,v,\crep)  \\
                        \sem{e_1 \wedge e_2}(f,v,\crep) & := & \sem{e_1}(f,v,\crep) \wedge \sem{e_2}(f,v,\crep)  \\
                        \sem{e_1 \vee e_2}(f,v,\crep) & := & \sem{e_1}(f,v,\crep) \vee \sem{e_2}(f,v,\crep)  \\
                \end{array}
        \]
\vspace{-0.4cm}\caption{$\conspec$ semantics}\vspace{-0.5cm}
\label{fig:lang_sem}
\end{figure}

Every specification in $\conspec$ is interpreted over a triple, namely, the classifier $f \in \mathbb{R}^d\fun\mathbb{R}^{|\Labels|}$ under consideration, an input $v \in \mathbb{R}^d$, and a concept representations map $\crep \in \Concepts \fun  (\mathbb{R}^{d} \fun \mathbb{R})$; $\crep$ maps each concept $con \in \Concepts$ to a function of type $\mathbb{R}^{d} \fun \mathbb{R}$ that takes in an input and returns a number indicating the \emph{strength} at which concept $con$ is contained in the input. Although we describe some possible implementations of $\crep$ in this paper (see Defn.~\ref{def:con_rep_vlm} and~\ref{def:con_eqv}), in general, the semantics of $\conspec$ allow any possible appropriately typed implementation of $\crep$---for instance, the function corresponding to each concept could itself be a neural network~\cite{toledo2023deeper}. 

Given a concept representations map, 
the predicate $\strength(x,con_1,con_2)$ evaluates to $\truel$ if, as per $\crep$, strength of concept $con_1$ in the input is greater than that of concept $con_2$. Predicate $\outl$ constrains the class predicted by the classifier under consideration on the given input. The semantics of the logical connectives are standard.

Given these semantics for $\conspec$, we say that a classifier $f:X\fun Y$ \emph{satisfies} a $\conspec$ specification $e$ with respect to a concept representations map $\crep$ and an input scope $B$ (where $B \subseteq X$), if $e$ evaluates to $\truel$ for all inputs in $B$. Defn.~\ref{def:sat} expresses this formally. 
Here $B$ could be the whole set $X$, or a {\em region} in $X$, or just a set of (in-distribution) test images.
To simplify the notation, we write predicate $\strength(x,con_1,con_2)$ as $con_1 \strength con_2$, where $B$ is understood from the context, and $x$ is understood to range over $B$. Similarly, we write $\outl(c)$ instead of $\outl(x,c)$.

\begin{definition}[Satisfaction of specification by model]
\label{def:sat}
Given a model $f: X \fun Y$, a concept representations map $\crep$, and an input scope $B\subseteq X$, $(f,\crep,B)$ satisfies a $\conspec$ specification $e$ (denoted as $(f,\crep,B) \models e$) if,

$$\forall x \in B.~\sem{e}(f,x,\crep) = \truel.$$
\end{definition}

\section{Vision-Language Models as Analysis Tools}
\label{sec:multimodal}

The parametric nature of the $\conspec$ semantics with respect to the concept representations map $\crep$ necessitates giving a concrete implementation of $\crep$ before we can verify if a model satisfies a $\conspec$ specification. The map $\crep$ needs to be semantically meaningful---ideally, the strength of a concept in an input as per $\crep$ should be in congruence with the human understanding. For the purpose of verification, it is also essential to ensure that $\crep$ can be encoded as efficiently checkable constraints.

We propose the use of vision-language models (VLMs) such as CLIP~\cite{pmlr-v139-radford21a} as a means for implementing $\crep$. 
VLMs are typically treated as foundation models~\cite{Bommasani2021FoundationModels} that are well-trained on vast and diverse datasets to serve as building blocks for downstream applications. As a result, these models are exposed to a variety of concepts from multiple datasets and can therefore serve as a useful repository of concept representations. The richness of the VLM embedding space also opens up the opportunity to define aggregate concepts such as \textit{``wheels and metallic''} where the logical connectives are incorporated into the concept itself instead of expressing them externally via the connectives of $\conspec$, thereby simplifying the structure of a $\conspec$ specification at the cost of more complex concepts. We plan to explore this direction more in the future.

Defn.~\ref{def:con_rep_vlm} describes the implementation that we use for the concept representation map $\crep$ that uses a VLM $g$. 
This definition assumes that concepts are represented as directions in the VLM space; however note that this notion was enforced during training. In particular, the VLM's training objective requires the pair of embeddings generated for an image-caption pair, by the image and text encoders of a VLM, to be aligned in the same direction as measured via cosine similarity. 

The direction corresponding to a concept $con$ is extracted using the text encoder $\encode{g}^{txt}$ of $g$. Following previous work~\cite{moayeri2023text}, a vector $\condir{con}$ in the direction of concept is computed as the mean of text embeddings for a set of captions (denoted as $\Caption(con)$) that all refer to $con$ in different ways. We compute the mean since there may be many different valid captions constructed for the same concept and each caption leads to a slightly different embedding. Notice that there is no need for manual concept annotations to infer the concept directions. 

\begin{definition}[Linear $\crep$ via VLM]
\label{def:con_rep_vlm}
Given a VLM $g$ with image encoder $\encode{g}^{img}:X\fun Z$ and text encoder $\encode{g}^{txt}:T\fun Z$ where $Z:=\mathbb{R}^{d'}$, the linear concept representation map, $\crep$, via model $g$ is defined as,
$$
\crep(con) := \lambda x. \cos(\encode{g}^{img}(x),\condir{con}) 
$$
where $\condir{con}:=\frac{\Sigma_{t \in \Caption(con)}\encode{g}^{txt}(t)}{|\Caption(con)|}$ and $\Caption(con)$ is a set of sentences or captions referring to concept $con$.
\end{definition}

We can use Defn.~\ref{def:con_rep_vlm} to provide an implementation for $\crep$ which we statistically validate to check that it is semantically meaningful, according to human-understandable concepts (as described in Section~\ref{sec:case_studies:stat}). We can use $\crep$ to instantiate the semantics of our language and reason about properties for the VLM (i.e., CLIP) itself. In principle, we can use the same implementation of $\crep$ to reason about the original vision model. However, this would introduce a major scalability challenge---in the course of verifying a vision model $f$, we would not only need to reason about the whole model $f$ but also about the complex image encoder $\encode{g}^{img}$ of VLM $g$ since $\encode{g}^{img}$ is used to implement $\crep$. Fortunately, 
it has been previously observed~\cite{moayeri2023text} that the representation spaces of vision-based DNNs and VLMs such as CLIP can be linked to each other via an affine map. We employ this map to define a new implementation of $\crep$ which results in scalable verification.

\paragraph{\textbf{Mapping vision model embedding to VLM embedding.}} 

Let $\Dtrain \subset X $ denote an image dataset. 
Given the encoder $\encode{f}:X\fun Z_f$ of a vision model $f$ and the image encoder $\encode{g}^{img}:X\fun Z_g$ of VLM $g$,  
we aim to learn a map $\repmap:Z_f\fun Z_g$ such that the representation spaces of the two models are {\em aligned} as much as possible, i.e., given an image $x \in X$, ideally $\repmap(\encode{f}(x)) = \encode{g}^{img}(x)$ holds.

Moayeri et al. \cite{moayeri2023text} show that $\repmap$ can be restricted to the class of affine transformations, i.e., $\repmap(z) := M z + d$, where $\repmap$ is learnt by solving the 
following optimization problem,
\begin{equation}
\label{eqn:opt}
    M, d = \argmin_{M, d} \frac{1}{|\Dtrain|} \sum_{x \in \Dtrain} \|M \encode{f}(x) + d - \encode{g}^{img}(x)\|_2^2.
\end{equation}

We use $\repmap$ to define $\crep$ as follows.

\begin{definition}[Linear $\crep$ via vision model, $\repmap$, and VLM]
\label{def:con_eqv}
Given a vision model $f:X\fun Y$ with encoder $\encode{f}:X\fun Z_f$, a VLM $g$ with encoders $\encode{g}^{img}:X\fun Z_g$ and $\encode{g}^{txt}:T\fun Z_g$,  
and a linear map $\repmap$ as described above, then the concept representation map, $\crep$, is defined as,
$$
\crep(con) := \lambda x. \cos(\repmap(\encode{f}(x)),\condir{con}) 
$$
where $\condir{con}$ is a vector in $Z_g$ whose direction corresponds to concept $con$, i.e., $\condir{con}:=\frac{\Sigma_{t \in \Caption(con)}\encode{g}^{txt}(t)}{|\Caption(con)|}$.
\end{definition}

We analyze some of the properties of this $\crep$ as follows. First, let us formalize the notion of {\em faithful representation space alignment} (Defn.~\ref{def:faithful}). Intuitively, the alignment is faithful if no information is lost when using $\repmap$ to map between the two representation spaces.

\begin{definition}[Faithful alignment of representation spaces]
\label{def:faithful}
Given an encoder $\encode{f}:X\rightarrow Z_f$ of a vision model and an image encoder $\encode{g}^{img}:X\rightarrow Z_g$ of a VLM $g$, the representation space of $\encode{f}$ is faithfully aligned with the representation space of $\encode{g}^{img}$ if there exists a map $\repmap:Z_f\fun Z_g$ such that,
$$
\forall x \in X.~\repmap(\encode{f}(x)) = \encode{g}^{img}(x)
$$
\end{definition}

It is easy to see that if $\repmap$ satisfies the condition of faithful alignment, then the implementations of $\crep$ according to Defn.~\ref{def:con_rep_vlm} and \ref{def:con_eqv} coincide. Thus we could use Defn.~\ref{def:con_eqv} instead of Defn.~\ref{def:con_rep_vlm} when implementing $\crep$. However, in practice, it is possible that the representations spaces of a vision model and a VLM may not be faithfully aligned; there may be cases such that for the $\repmap$ we learn, $\repmap(\encode{f}(x)) = \encode{g}^{img}(x)$ does not hold for some images. Thus, it is possible that the two implementations of $\crep$ as we defined (one via VLM and one via vision model and $\repmap$)
may yield different results for some images. However, we argue that since we need to statistically validate $\crep$ anyway, we can directly statistically validate $\crep$ (from  Defn.~\ref{def:con_eqv}) and use it to instantiate the semantics of our language.

Once we have an implementation of $\crep$ as per Defn.~\ref{def:con_eqv}, we can define a constraint-based verification procedure for the vision model $f$. This implementation of $\crep$  has the interesting property that it allows us to factor out the constraints corresponding to $\encode{f}$ and $\encode{g}^{img}$ resulting in efficient verification.

\paragraph{\textbf{Exploiting model decomposition for verification.}} 
We exploit the decomposition of a vision model $f$ into $\encode{f}$ and $\head{f}$ to define an efficient verification procedure for vision models with respect to a $\conspec$ specification. In particular, Thm.~\ref{thm:eqv_verif} shows that the verification of a model $f$ with respect to a $\conspec$ specification $e$ can be reduced to the verification of $\head{f}$, without involving $\encode{g}^{img}$.

\begin{theorem}
\label{thm:eqv_verif}
Given a vision model $f:X\fun Y$ that can be decomposed into $\encode{f}:X\fun Z$ and $\head{f}:Z\fun Y$ where $Z:=\mathbb{R}^{d'}$, a $\conspec$ specification $e$, a linear concept representation map $\crep$, as defined in Defn.~\ref{def:con_eqv}, and an input scope $B \subseteq X$,
$$
(f,\crep,B) \models e \Leftrightarrow (\head{f},\widehat{\crep},\encode{f}(B)) \models e
$$
where $\widehat{\crep}$ operates on embeddings instead of inputs and is obtained from $\crep$ by replacing $\encode{f}$ with the identity function and $\encode{f}(B) := \{\encode{f}(x) \mid x \in B\}$.
\end{theorem}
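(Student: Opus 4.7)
My plan is to prove the biconditional by unfolding Definition~\ref{def:sat} on both sides and showing, by structural induction on the $\conspec$ expression $e$, that for every $x \in B$ we have $\sem{e}(f,x,\crep) = \sem{e}(\head{f},\encode{f}(x),\widehat{\crep})$. Once this per-point equivalence is established, the universally quantified versions match up because the set $\encode{f}(B)$ is by definition exactly $\{\encode{f}(x) \mid x \in B\}$, so ranging $z$ over $\encode{f}(B)$ is the same as ranging $\encode{f}(x)$ over $x \in B$.

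The key observation driving the induction is the following factorization, which I would state and prove as a lemma first: for every concept $con \in \Concepts$ and every $x \in X$,
\[
\crep(con)(x) \;=\; \cos(\repmap(\encode{f}(x)),\condir{con}) \;=\; \widehat{\crep}(con)(\encode{f}(x)),
\]
where the first equality is Definition~\ref{def:con_eqv} and the second is the definition of $\widehat{\crep}$, obtained from $\crep$ by replacing $\encode{f}$ with the identity function. The base case for strength predicates then follows immediately because $\sem{\strength(x,con_1,con_2)}$ is just the Boolean comparison $\crep(con_1)(x) > \crep(con_2)(x)$, and by the factorization both sides yield the same comparison on the $\widehat{\crep}$ side evaluated at $\encode{f}(x)$.

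For the $\outl(x,c)$ base case, I would invoke the model decomposition $f = \head{f} \circ \encode{f}$ so that $f(x) = \head{f}(\encode{f}(x))$; hence $\argmax(f(x)) = \argmax(\head{f}(\encode{f}(x)))$, which is exactly the condition under which $\sem{\outl(x,c)}(\head{f},\encode{f}(x),\widehat{\crep})$ evaluates to $\truel$. The inductive cases for $\neg$, $\wedge$, and $\vee$ are routine, since the semantics of these connectives simply lift the Boolean values of subexpressions and the inductive hypothesis gives equality of those values.

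I do not expect a real obstacle here; the statement is essentially a compositionality argument. The only mild subtlety is making sure that $\widehat{\crep}$ is well-defined at every point of $\encode{f}(B)$ (rather than needing it on all of $Z$), which is automatic from the way the image $\encode{f}(B)$ is constructed, and that the $\argmax$ semantics of $\outl$ treat ties identically on both sides --- but since $f(x)$ and $\head{f}(\encode{f}(x))$ are literally the same vector in $\mathbb{R}^{|\Labels|}$, any tie-breaking convention carries over verbatim.
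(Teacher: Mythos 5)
Your proposal is correct and follows essentially the same route as the paper: the key factorization $\crep(con)(x)=\widehat{\crep}(con)(\encode{f}(x))$ (the paper's Eqn.~\ref{eqn:lem}), a structural induction on $e$ establishing the pointwise equivalence $\sem{e}(f,x,\crep)=\sem{e}(\head{f},\encode{f}(x),\widehat{\crep})$, and the decomposition $f=\head{f}\circ\encode{f}$ for the $\outl$ case. The only difference is organizational: the paper proves two direction-specific lemmas (one quantified over $B$, one over $\encode{f}(B)$ and preimages) and combines them, whereas your single pointwise lemma handles both directions of the biconditional at once --- which is, if anything, slightly cleaner.
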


\begin{proof}
 See Appendix~\ref{sec:app_proof}.
\end{proof}

\paragraph{\textbf{Discussion.}}
Our verification is sound only with respect to the interpretation given to the predicates by the concept representation map ($\crep$).
In this paper, we use a VLM to implement $\crep$; VLMs are trained on massive amounts of data and, as a result, are a rich repository of concepts. With the trend towards training even bigger VLMs on larger amounts of data, the implementation of $\crep$ is expected to improve.
We use a statistical analysis to validate that $\crep$ conforms with human understanding, i.e., it is semantically meaningful (described in Section~\ref{sec:case_studies:stat}).  
Further, we restrict our verification to concepts that are well represented in the VLM; if some concept is not well represented, we would not attempt verification for it.
Note also that Thm.~\ref{thm:eqv_verif} holds for arbitrary decompositions of $f$ into $\head{f}$ and $\encode{f}$. 
Our assumption, supported by previous work~\cite{moayeri2023text}, is that the head of the network is moderately sized, making it suitable for analysis with modern decision procedures. 
Although these are strong assumptions, they enable us to make progress on the challenging problem of verification of semantic properties for vision models.

\section{Case Study}
\label{sec:case_studies}
We describe the instantiation of our approach via a case study using a ResNet18 vision model~\cite{he2016deep,moayeri2022comprehensive} and CLIP~\cite{pmlr-v139-radford21a} as the VLM on the RIVAL10 image classification dataset~\cite{moayeri2022comprehensive}.

\subsection{Dataset, Concepts, and Strength Predicates}
\label{sec:case_studies:dataset}

\paragraph{RIVAL10 (\textbf{RI}ch \textbf{V}isu\textbf{AL} Attributions)}~\cite{moayeri2022comprehensive} is a dataset consisting of 26k images drawn from ImageNet~\cite{imagenet}, organized into 10 classes matching those of CIFAR10.
The dataset contains manually annotated instance-wise labels for 18 attributes (which we use to define set $\Concepts$) as well as the respective segmentation masks for these attributes on the images. 
The dataset is partitioned into a train set with 21k images and a test set with 5k images.

\begin{wrapfigure}{l}{7cm}%
    \centering
    \includegraphics[width=.9\linewidth]{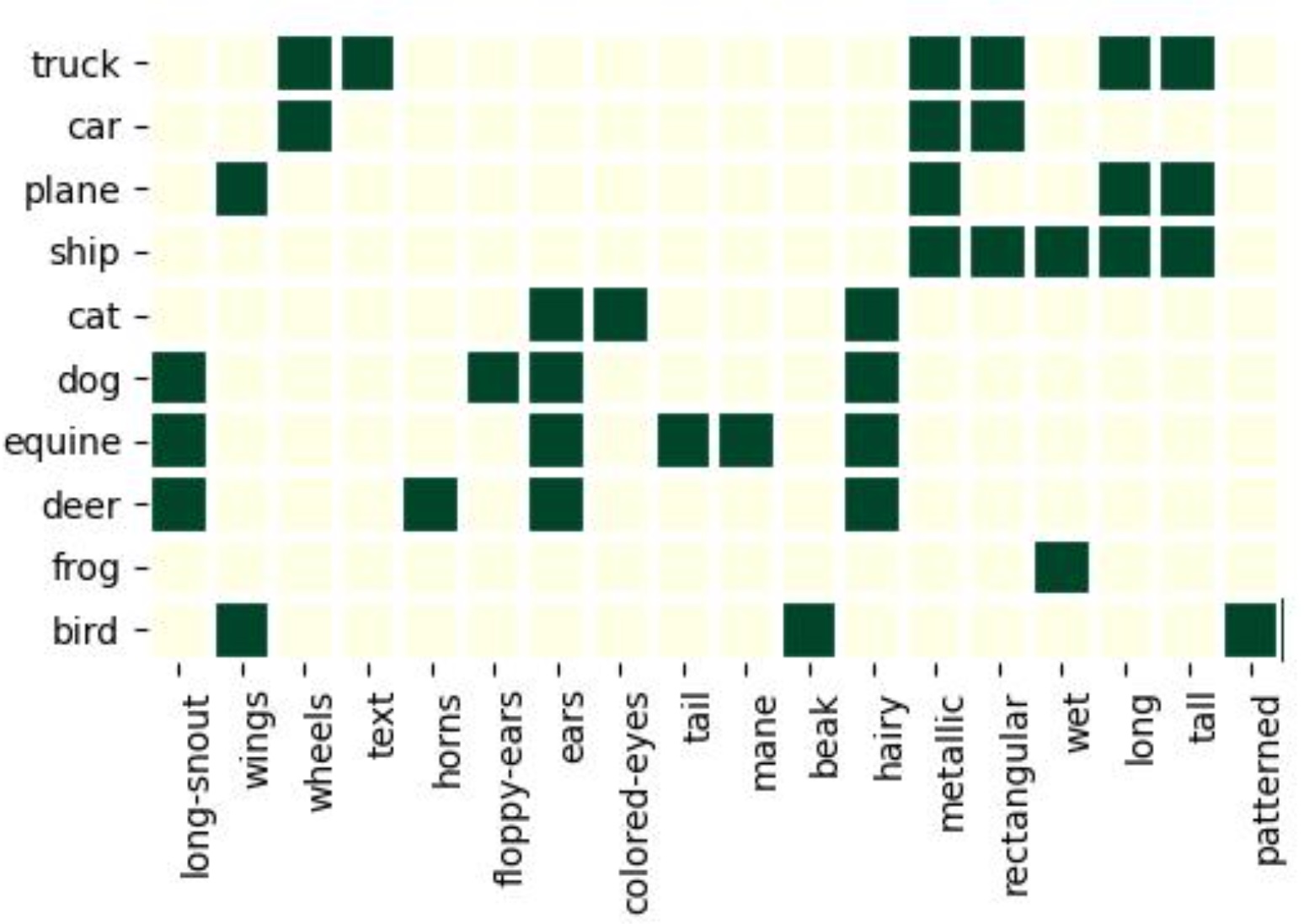}
    \caption{Relevant concepts per class}
    \label{fig:hitmap}\vspace{-0.5cm}
\end{wrapfigure}
These annotations can be used to extract the concepts that are relevant for each class and also to elicit strength predicates as follows. For each class $\labell$, we compute the percentage of inputs (from the train set) that have ground truth $\labell$ and are annotated with concept $con$. If this is greater than a threshold (70\%) we consider that concept as {\em relevant}. Figure~\ref{fig:hitmap} illustrates the relevant concepts that are computed for each class; on the y axis we list the classes, on the x axis we list the concepts; green boxes mark the relevant concepts. For instance, for class \textit{truck}, concepts  \textit{wheels, tall, long, rectangular, metallic} and \textit{text} are all considered relevant whereas the other concepts (e.g., \textit{tail} and \textit{wings}) are irrelevant. 

The information in Figure~\ref{fig:hitmap} can be further used to elicit strength predicates relevant for each class $\labell$---for each pair of relevant ($con_r$) and irrelevant ($con_{ir}$) concepts, we can define a predicate $con_r \strength con_{ir}$ that we expect to hold on inputs of class $\labell$.
For instance, for class \textit{truck}, we formulate a total of 72  predicates, corresponding to all the possible combinations of relevant and irrelevant concepts;
e.g., we elicit $wheels \strength wings$ and $metallic \strength tail$, as $wheels$ and $metallic$ are relevant concepts, whereas $wings$ and $tail$ are not.
Overall, we formulate a total of 522 such predicates for the model. In practice, we expect human domain experts to directly provide the concepts and strength predicates relevant to the task. Since we only use the concept annotations to elicit the strength predicates, there is no need to manually annotate each image if these predicates are given.

\subsection{Models}
We employed an already trained CLIP model\footnote{ \url{https://github.com/openai/CLIP}.}\cite{pmlr-v139-radford21a}, with a Vision Transformer~\cite{dosovitskiy2021an} (particularly, ViT-B/16) as the image encoder and a Transformer~\cite{vaswani2017attention} as the text encoder, as the VLM for our experiments. The representation space of the CLIP model is of type $\mathbb{R}^{512}$. The head for the VLM is a zero-shot classifier implemented as described in Section~\ref{sec:prelim}. On the RIVAL10 dataset, it has a test accuracy of $98.79\%$.  As our vision model, we employed an already trained ResNet18 model made available by the developers of the RIVAL10 dataset\footnote{\url{https://github.com/mmoayeri/RIVAL10/tree/gh-pages}} that is pretrained on the full ImageNet dataset and the final layer of the model, i.e., the head, is further fine-tuned on the RIVAL10 dataset in a supervised fashion using the class labels. The head is a single linear layer with no activation functions that accepts inputs of type $\mathbb{R}^{512}$ and produces outputs of type $\mathbb{R}^{10}$. The model has a test accuracy of $96.73\%$ on RIVAL10.

\subsection{Extraction of Concept Representations}
\label{sec:case_studies:crep}

In order to analyze the models with respect to the specifications that we formulate for RIVAL10,   we need to build the concept representation map $\crep$ using the VLM (as defined in Defn.~\ref{def:con_eqv}). In particular, we need to extract the directions corresponding to the relevant concepts in CLIP's representation space and learn the affine map from the representation space of ResNet18 to CLIP. The first step is to define the set of \emph{relevant} concepts. For RIVAL10, these are the 18 attributes defined by the developers of the datatset. In general, the relevant concepts are elicited in collaboration with domain experts.

Next, to extract the directions corresponding to these 18 concepts, we use CLIP's text encoder (referred to as $\encode{g}^{txt}$) in a manner similar to the approach described by Moayeri et al.~\cite{moayeri2023text}. In particular, as described in Defn.~\ref{def:con_rep_vlm}, for each concept $con$, we create a set of captions that refer to the concept (denoted as $\Caption(con)$). As an example, for the concept \textit{metallic}, we use captions such as \textit{``a photo containing a metallic object''}, \textit{``a photo of a metallic object''}, etc. The complete set of captions used is given in Appendix~\ref{sec:app_captions}.
We then apply $\encode{g}^{txt}$ to each caption in the set $\Caption(con)$ and compute the mean of the resulting embeddings. The direction of the resulting mean vector corresponds to $\condir{con}$, i.e., the direction representing the concept $con$ in CLIP's representation space. 

The final step is to learn the affine map $\repmap$ from the representation space of the ResNet18 vision model to the representation space of the CLIP model used in our case-study by solving the optimization problem in Equation~\ref{eqn:opt}. The problem is solved via gradient descent, following the approach of Moayeri et al.~\cite{moayeri2023text}, on RIVAL10 training data for $50$ epochs using a learning rate of $0.01$, momentum of $0.9$, and weight decay of $0.0005$.  The learnt map, of type $\mathbb{R}^{512}\fun\mathbb{R}^{512}$,
has a low Mean Squared Error (MSE) of 0.963 and a high Coefficient of Determination (R$^2$) of 0.786 on the test data, suggesting that it is able to align the two spaces.

\subsection{Statistical Validation of $\crep$ via Strength Predicates}
\label{sec:case_studies:stat}

\begin{figure}[t]
    \centering
    \begin{subfigure}{0.49\textwidth}
        \centering
        \includegraphics[width=0.99\textwidth]{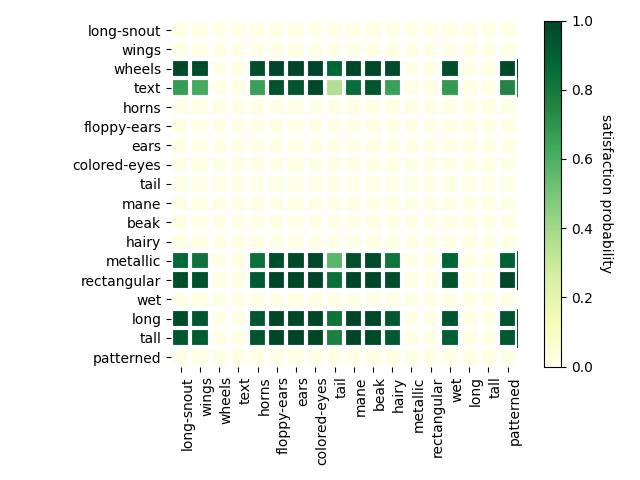} %
        \caption{Strength predicates for  \textit{truck}\vspace{-0.3cm}\label{fig:clip:stat_truck}}
    \end{subfigure}\hfill
    \begin{subfigure}{0.49\textwidth}
        \centering
        \includegraphics[width=0.99\textwidth]{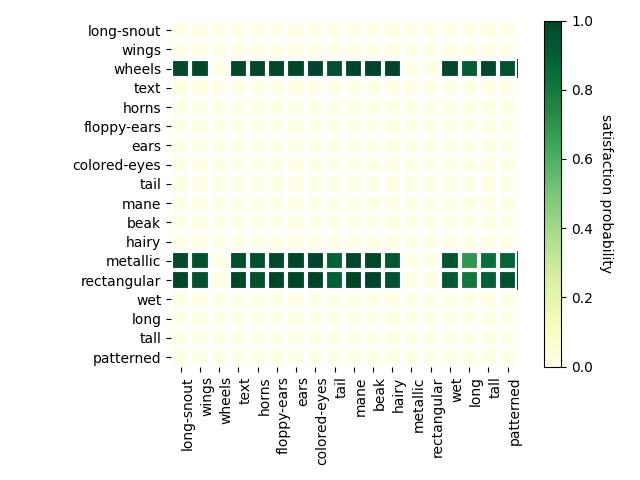} %
        \caption{Strength predicates for  \textit{car}\vspace{-0.3cm}\label{fig:clip:stat_car}}
    \end{subfigure}
    \caption{Satisfaction probabilities for $\crep$ implemented only using CLIP model $g$}
    \label{fig:clip:stat}
    \vspace{-0.5cm}
\end{figure}

\begin{figure}[t]
    \centering
    \begin{subfigure}{0.49\textwidth}
        \centering
        \includegraphics[width=0.99\textwidth]{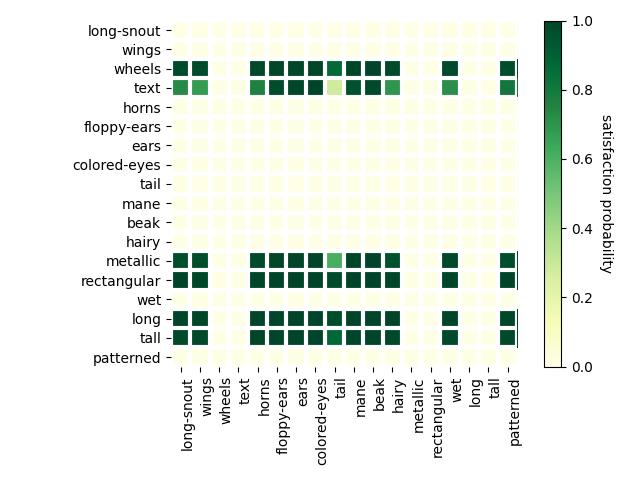} %
        \caption{Strength predicates for  \textit{truck}\vspace{-0.3cm}\label{fig:resnet:stat_truck}}
    \end{subfigure}\hfill
    \begin{subfigure}{0.49\textwidth}
        \centering
        \includegraphics[width=0.99\textwidth]{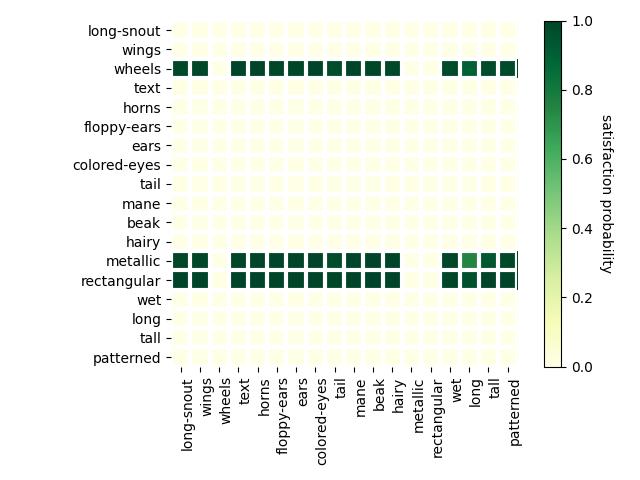} %
        \caption{Strength predicates for  \textit{car}\vspace{-0.3cm}\label{fig:resnet:stat_car}}
    \end{subfigure}
    \caption{Satisfaction probabilities for $\crep$ implemented using ResNet18 model $f$ and CLIP model $g$}
    \label{fig:resnet:stat}
    \vspace{-0.5cm}
\end{figure}

Recall that since the semantics of $\conspec$ are parameterized by the concept representation map $\crep$, the validity of the semantics is dependent on the validity of $\crep$. Our next step is to statistically validate the $\crep$   implemented using CLIP model $g$ (as per Defn.~\ref{def:con_rep_vlm}) as well the one implemented via the affine map $\repmap$ (as per Defn.~\ref{def:con_eqv}) between the ResNet18 $f$ and $g$'s representations spaces.

Towards this end, we statistically validate our formulated strength predicates using the RIVAL10 test data ($\Dtest$) for the concept representation maps under consideration. In particular, for a given $\crep$, we measure the rate (referred to as \emph{satisfaction probability}), for all classes $\labell$, at which inputs in $\Dtest$ with ground-truth label $\labell$ satisfy the strength predicates formulated for $\labell$. Our intuition is that, given an ideal concept representation map, the strength predicates for a class $\labell$ should always hold, i.e., with probability 1.0 (unless the formulated predicates are nonsensical, for instance, $hairy \strength metallic$ for class \textit{truck}, represented by yellow-colored squares in Fig.~\ref{fig:clip:stat} and~\ref{fig:resnet:stat}). Thus, consistently low satisfaction probabilities indicate a low-quality $\crep$. Note that this procedure neither requires access to an ideal concept representation map nor data with manually annotated concept labels if the strength predicates are given.

\paragraph{\textbf{Results.}}
The results for $\crep$ implemented only using CLIP model $g$ (as in Defn.~\ref{def:con_rep_vlm}) are shown in Fig.~\ref{fig:clip:stat_truck} and~\ref{fig:clip:stat_car}. %
Concepts on the Y-axis represent $cor_{r}$ and on the X-axis represent $con_{ir}$ in the evaluated strength predicates.
We see that for class \textit{truck}, except for strength predicates involving concept $\textit{text}$, the satisfaction probabilities are consistently high. Similar results are observed for class \textit{car} as well as other classes. This provides strong evidence for our hypothesis that a concept representation implemented using CLIP is of high-quality. 

The results for $\crep$ implemented using the ResNet18 model $f$ and CLIP model $g$ via the affine map (as per Defn.~\ref{def:con_eqv}) are summarized in Fig.~\ref{fig:resnet:stat_truck} and~\ref{fig:resnet:stat_car}. The trends are similar to those observed for $\crep$ implemented only using $g$. In addition to suggesting that the $\crep$ based on the affine map $\repmap$ is also of high-quality, these results are evidence in favor of the assumption that the representation space of $\encode{f}$ is well aligned with the representation space of $\encode{g}^{img}$ via our learned affine map (also suggested by the low MSE and high R$^2$ of $\repmap$).
However, note that the representation spaces are not {\em faithfully  aligned}, as indicated by the counterexamples in Fig.~\ref{fig:debug_small}. %

\begin{wrapfigure}{l}{6cm}\vspace{-0.5cm}
    \centering
    \begin{subfigure}{0.24\textwidth}
        \centering
        \includegraphics[width=0.99\textwidth]{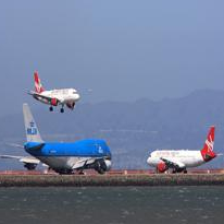} 
        \caption*{\tiny (a) Class: \textit{plane}, CLIP: \textit{plane}, ResNet: \textit{ship}, CLIP via $\repmap$: \textit{ship}\label{fig:debug_smalla}}
    \end{subfigure}\hfill
    \begin{subfigure}{0.24\textwidth}
        \centering
        \includegraphics[width=0.99\textwidth]{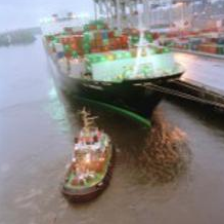} 
        \caption*{\tiny (b) Class: \textit{ship}, CLIP: \textit{ship}, ResNet: \textit{frog}, CLIP via $\repmap$: \textit{frog}\label{fig:debug_smallb}}
    \end{subfigure}
    \caption{Evidence for representation spaces of $\encode{f}$ and $\encode{g}^{img}$ not being  faithfully aligned.}
    \label{fig:debug_small}
    \vspace{-1cm}
\end{wrapfigure}

Consider Fig.~\ref{fig:debug_small}a. The ground-truth class of the input image $x$ is \textit{plane}, the ResNet18 model $f$ mis-classifies it as a \textit{ship}, and the CLIP model $g$ (using a zero-shot classification head) correctly classifies it as a \textit{plane}. However, the label assigned by the zero-shot classification head of $g$ to the embedding vector $\repmap(\encode{f}(x))$ 
is \textit{ship} and not \textit{plane}. Clearly, not only are the embeddings $\encode{g}(x)$ and $\repmap(\encode{f}(x))$ different but they are far enough from each other to cause different classification outcomes. Fig.~\ref{fig:debug_small}b presents a similar scenario. 

\subsection{Verification}
\label{sec:case_studies:verif}

Next, we use the elicited predicates to form specifications that we attempt to formally verify. %
In particular, we performed two sets of experiments---first, we conduct formal verification of the CLIP model itself, and then, we formally verify the ResNet18 model.  In both cases, we can reduce the verification to solving of linear constraints (despite the fact that $\cosine$ involves non-linear constraints). Note that the reduction to linear constraints relies on the assumption that the head of the model is a linear layer. While this assumption holds true for the models we consider, in general the head may have non-linear layers in which case we would need a more powerful solver to address the non-linear constraints.
For space reasons, we report only the ResNet18 results here, while the CLIP results are delegated to Appendix~\ref{sec:app_clip}.

We restrict our investigation only to statistically significant strength predicates, i.e., those that hold on the train set %
with over $95\%$ probability, since they are more likely to be semantically meaningful and are more likely to hold. Moreover, in order to decide if a model satisfies a specification, we need to define an input scope $B \subseteq X$ (see Defn.~\ref{def:sat}).

We could try to show that a specification holds for all inputs $v\in X$ but we find that such a requirement is too strong. Majority of the inputs in set $X$ do not correspond to any meaningful image (i.e., are out-of-distribution), and attempting to verify model behavior at such inputs typically leads to verification failure without discovering any useful counter-examples. %

Ideally, we would like to verify the models on inputs that are in-distribution, i.e., in the support set of the distribution over $X$ %
that characterizes the input data. Unfortunately, this is an unsolved challenge in the research community, as it is not known how to mathematically represent in-distribution data in a  sound and complete manner. 
We describe here approximations of the same that can be useful in practice. 
We also leverage Thm.~\ref{thm:eqv_verif} to conduct proofs in the representation space instead of the input space.

To simplify the notation, in the rest of the paper, we refer to the image of $B$ under an encoder as $\overline{B}$.
We attempt to check specifications of the form $\outl(c)\implies con_1\strength con_2$, which have the flavor of formal explanations for model behaviour.
We check the specifications via constraint integer programming, leveraging the off-the-shelf solver SCIP~\cite{bestuzheva2021scip}; i.e., we attempt to solve $\neg(\outl(c)\implies con_1\strength con_2)$ equivalent to $\outl(c) \wedge \neg(con_1\strength con_2)$. If no solution is found, it means the property holds, while a solution indicates a counterexample. As we only solve linear programs, the constraints take less than a few seconds to solve for all the specifications that we investigated.
Our experiments were performed on Intel(R) Xeon(R) Silver 4214 CPU @ 2.20GHz.
We also attempted to check other specifications where the predicate $\outl$ is on the right-hand side of implication e.g., $\concl(ears)\implies \neg \outl(truck)$. Although such properties can also be encoded as integer linear programs, for space reasons, we focus on the former type of specifications here.

\vspace{-0.5cm}
\subsubsection{Verification of ResNet18.}
For this experiment, we verify a ResNet18 vision model $f$ while using CLIP as a VLM $g$ to implement the concept representation map $\crep$ as per Defn.~\ref{def:con_eqv}. 
We first identify interesting regions in the embedding space of the network, and then check if the statistically significant concept predicates hold in these regions. 
\vspace{-0.3cm}
\paragraph{\textbf{Focus regions.}}
We start by defining an input scope (or region) $\overline{B}:=[[l_1,u_1], \ldots,\allowbreak [l_p, u_p]]$ in the embedding space of the vision model. 
We experiment with three different approaches, referred to as $A_1$,
$A_2$ and $A_3$, for defining $\overline{B}$. 
For each class $\labell$, approach \textbf{$A_1$} attempts to capture the set of in-distribution inputs which the model classifies as $\labell$. We consider inputs from the RIVAL10 data set
that the model classifies as class $\labell$ (denoted as $D^{\labell}$)
and define the focus region $\overline{B}$ for class $\labell$ as $\overline{B} := [[l_1,u_1], \ldots, [l_p, u_p]]$, where $\forall i \in [1,p] \;\; l_i = min(\{\encode{f}(x)_i\mid x \in D^{\labell}\}) \;, u_i = max(\{\encode{f}(x)_i\mid x \in D^{\labell}\})$ and $z_i$ refers to $i$th feature of embedding vector $z=\encode{f}(x)$. 

For each class $\labell$, approach \textbf{$A_2$} attempts to refine the region defined in $A_1$ by restricting the set characterizing in-distribution embeddings to a region where the model output is correct for class $\labell$. The intuition is that model is more likely to satisfy the specifications when it also predicts the correct class, while in case of mis-predictions, we expect the model to violate (at least some of) the specifications.
Given a set of correctly classified inputs for class $\labell$ 
($D'^{\labell}$) 
we define $\overline{B} := [[l_1,u_1], \ldots, [l_p, u_p]]$, where $\forall i \in [1,p] \;\; l_i = min(\{\encode{f}(x)_i\mid x \in D'^{\labell}\}) \;, u_i = max(\{\encode{f}(x)_i\mid x \in D'^{\labell}\})$. 

Given a set of input images classified to the same class, the model may internally apply different logic, possibly using different concepts, for different subsets of inputs.  
In approach \textbf{$A_3$}, we employ the method proposed in~\cite{gopinath2019property} to extract different preconditions ${r_1},\ldots,{r_n}$  (in terms of neuron-patterns) characterizing such sub-sets for each class. Let $D^{r_j}$ denote the subset of inputs that satisfy $r_j$ for class $\labell$;
we define the respective focus region, $\overline{B}^{r_j} := [[l_1,u_1], \ldots, [l_p, u_p]]$ , where $\forall i \in [1,p] \;\; l_i = min(\{\encode{f}(x)_i\mid x \in D^{r_j}\}) \;, u_i = max(\{\encode{f}(x)_i\mid x \in D^{r_j}\})$.

\paragraph{\textbf{Encoding the verification problem.}}
We want to formally check if the vision model $f$ satisfies a specification $e$ with respect to a concept representation map $\crep$ (as defined in Defn.~\ref{def:con_eqv}) and an input scope $B$ (i.e., $(f,\crep,B)\models e$) which, using Thm.~\ref{thm:eqv_verif}, can be rephrased as 
$(\head{f},\widehat{\crep},\overline{B})\models e$).

The head $\head{f}$ of the ResNet18 model comprises only of a single linear layer of type $\mathbb{R}^{512}\fun\mathbb{R}^{10}$. It is of the form $A w + b$ where $A$ and $b$ are parameters of the layer and $w$ denotes embedding vectors computed by the vision model (i.e., given an image $x$ of class $\labell$, $w=\encode{f}(x)$). 
Recall that image $x$ is classified as class $\labell$ iff 
$\head{f}(w)_{\labell} \geq  \head{f}(w)_{\labell_k}, \forall \labell_k\neq \labell.\footnote{Given vector $a$, $a_i$ denotes its $i$th element. We assume here that the set of classes $\Labels$ is $\{1,\ldots,10\}$, obtained by mapping class names to corresponding output indices.}$

These conditions for a class $\labell$ can be rewritten as 
$ 0 \leq \sum_i (A_{\labell,i} - A_{\labell_k,i}) w_i + (b_{\labell} - b_{\labell_k}), \forall \labell_k\neq \labell,$ where $A_{i,j}$ denotes $A[i][j]$th element of matrix $A$.
While such a straightforward encoding of the head is feasible for the ResNet18 model since it only has one linear layer in the head, in general, if the head has multiple layers such conditions can be obtained either via symbolic execution or by encoding the behaviour of all the head layers as constraints as in existing complete DNN verifiers~\cite{bastani2016measuring,tjeng2018evaluating}.

To define the concept representation map $\widehat{\crep}$, we first extract embeddings corresponding to concepts in the embedding space of CLIP (denoted as $q^{con_i}$ for concept $con_i$) using the CLIP text encoder.
As explained in Section~\ref{sec:multimodal}, we also learn an affine map $\repmap$ 
such that $z= \repmap(w)$ where $w$ is a vector in the vision model $f$'s embedding space and $z$ denotes a vector in the VLM model $g$'s embedding space. For simplicity, we assume that the length of both vectors is $p$ which holds in this case. 

 Equations~\eqref{eq:model:one:box}--\eqref{eq:model:one:rel} show our encoding to check if a specification $\outl(\labell) \implies con_1 \strength con_2$ holds for the region $\overline{B}$ (via the negated specification as earlier).

\vspace{-0.5cm}
\begin{align}
l_i \leq w_i\leq u_i,  [l_i,u_i] \in \overline{B}\label{eq:model:one:box}, \forall i=\{1,\ldots, p\}
\end{align}
\vspace{-0.3cm}
\begin{align}
0 \leq \sum_i (A_{\labell,i} - A_{\labell_k,i}) w_i + (b_{\labell} - b_{\labell_k}), \forall \labell_k\neq \labell  \label{eq:model:one:class}
\end{align}
\vspace{-0.3cm}
\begin{align}
z_j = \sum_i M_{j,i} w_i +d_j , i,j\in\{1,\ldots,p\} \label{eq:model:one:map}
\end{align}
\vspace{-0.3cm}
\begin{align}
\sum_i \frac{z_i}{\lVert z \rVert} \frac{q_i^{con_2}}{\lVert q^{con_2} 	\rVert} > \sum_i \frac{z_i}{\lVert z \rVert}
\frac{q^{con_1}_i}{\lVert q^{con_1} 	\rVert}\label{eq:model:one:rel}
\end{align}
\vspace{-0.3cm}

Equation~\eqref{eq:model:one:box} encodes region $\overline{B}$ constraints, Equation~\eqref{eq:model:one:class} ensures that $w$ is classified as $C$ by the vision model,
Equation~\eqref{eq:model:one:map} encodes linear mapping  $\repmap$ from $w$ to $z$, where $M$ and $d$ are parameters of the map.  Finally, Equation~\eqref{eq:model:one:rel}
encodes the negation of $con_1 \strength con_2$. 
Again, we can cancel out the norm of $z$ from both sides of the inequality simplifying it to  a linear inequality and introduce a slack variable $\varepsilon$ that we maximize to find the maximum violation:
\begin{align}\vspace{-0.5cm}
\sum_i {z_i} \frac{q_i^{con_2}}{\lVert q^{con_2} 	\rVert} > \varepsilon + \sum_i {z_i}
\frac{q^{con_1}_i}{\lVert q^{con_1} 	\rVert}\label{eq:model:one:simpl:opt:rel}
\end{align}

\begin{figure}[t]
    \centering
    \begin{minipage}{0.49\textwidth}
        \centering
        \includegraphics[width=0.99\textwidth]{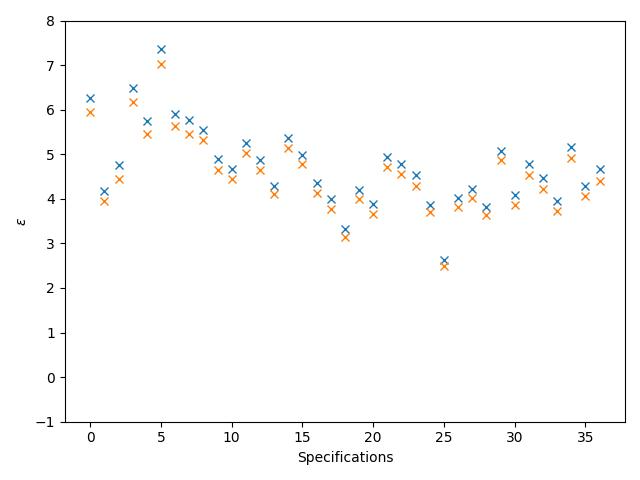} %
        \caption{Specification's violation measure $\varepsilon$ for $A_1$ method in blue and $A_2$ in orange. Class label is \textit{truck}. \label{fig:model:truck:min_max}}
    \end{minipage}\hfill
    \begin{minipage}{0.49\textwidth}
        \centering
        \includegraphics[width=0.99\textwidth]{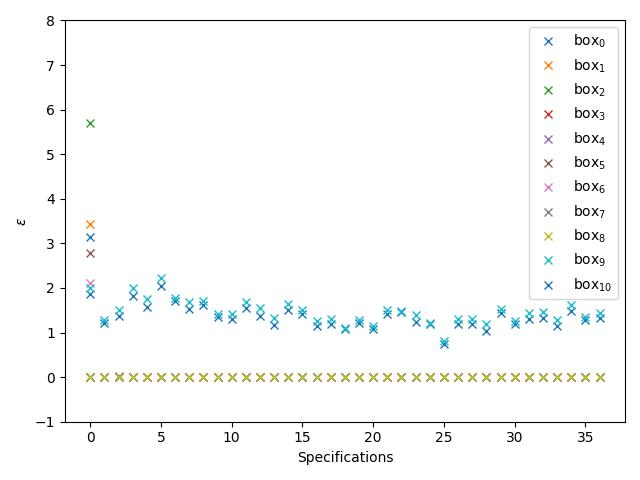} %
        \caption{Specification's violation measure $\varepsilon$ for $A_3$ method. 
        Each color corresponds to a box. Class label is \textit{truck}.   
        \label{fig:model:truck:min_max_rules}}
    \end{minipage}\vspace{-0.4cm}
\end{figure}

\begin{figure}[t]
    \centering
    \begin{minipage}{0.49\textwidth}
        \centering
        \includegraphics[width=0.99\textwidth]{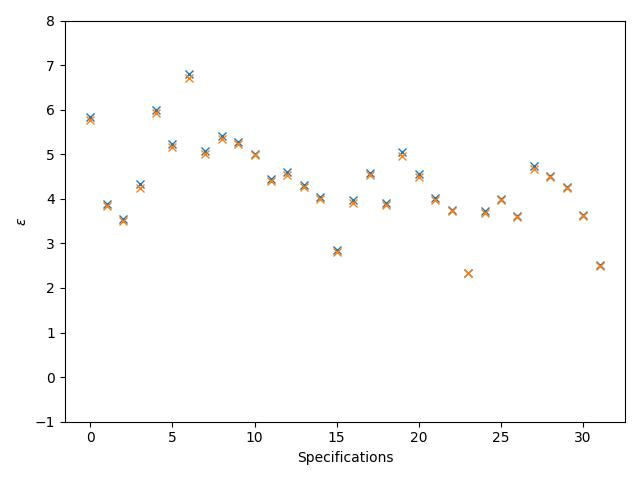} %
        \caption{Specification's violation measure $\varepsilon$ for $A_1$ method in blue and $A_2$ in orange. Class label is \textit{car}. \label{fig:model:car:min_max}}
    \end{minipage}\hfill
    \begin{minipage}{0.49\textwidth}
        \centering
        \includegraphics[width=0.99\textwidth]{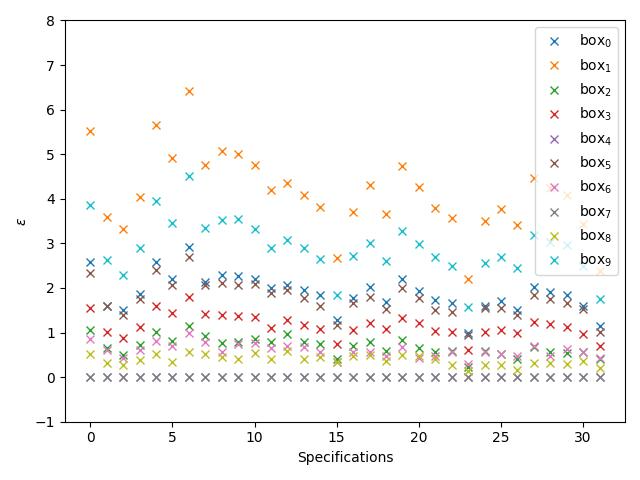} %
        \caption{Specification's violation measure $\varepsilon$ for $A_3$ method. 
        Each color corresponds to a box. Class label is \textit{car}.   
        \label{fig:model:car:min_max_rules}}
    \end{minipage}\vspace{-0.5cm}
\end{figure}

\paragraph{\textbf{Results.}}
Figures~\ref{fig:model:truck:min_max}--\ref{fig:model:car:min_max_rules} show results for our experiments for \textit{truck} and \textit{car} labels.
For \textit{truck}, Figure~\ref{fig:model:truck:min_max} shows results for options  $A_1$ and $A_2$  and Figure~\ref{fig:model:truck:min_max_rules} shows results for option $A_3$  which produces 11 boxes. 
For \textit{car}, Figure~\ref{fig:model:car:min_max} shows results for options  $A_1$ and $A_2$  and Figure~\ref{fig:model:car:min_max_rules} shows results for option $A_3$  which produces 10 boxes. 
For each class, a lower violation
of relevant specifications indicates that the model is making the classification decision for the ``right'' reasons.
For instance, consider specification 25 in Fig.~\ref{fig:model:truck:min_max} that has the lowest value for the violation measure $\varepsilon$ suggesting that if the ResNet18 model predicts \textit{truck}, it likely that %
$rectangular \strength patterned$ holds.
On the other, the high $\varepsilon$ value for specification 5 suggests that %
$wheels \strength colored-eyes$
is less likely to hold. 
The results also show that the region defined using $A_2$ has lower violations than $A_1$ for all specifications. This is expected since this region captures portions of the embedding space for which the model output is correct, and the model can be expected to satisfy more relevant specifications on correctly classified inputs vs. others. For $A_3$, overall all the boxes show lower violations than $A_1$ or $A_2$, indicating that they correspond to tighter regions capturing more precisely the inputs on which the model behaves correctly.  More interestingly, each box seems to satisfy (violate) different sets of specifications, indicating that each region corresponds to a different scenario or a different profile in terms of concepts.%
We note that for some boxes (box 7 and 8 for truck and box 7 for car),
the violation measure is 0 for all specifications. However this is because the corresponding boxes correspond to only one valid input (lower bound equals upper bound for each dimension).

\section{Related Work}
\label{sec:related}

The use of deep learning models, particularly computer vision, in safety-critical high-assurance applications such as autonomy and surveillance raises the need for formal analysis due to their complexity, and opacity. This need is compounded by the low-level, pixel-based nature of their inputs, making formal specifications challenging. Our proposed approach to leverage Vision-Language Models (VLMs) as a means to formalize and reason about DNNs in terms of natural language builds upon and intersects several research domains.

{\em \textbf{Formal Analysis of DNNs.}}
The formal analysis of DNNs, especially in safety-critical applications, has been a subject of growing interest. Works like \cite{huang2017safety, katz2017reluplex, dutta2018output} have explored methods to verify the safety and correctness of DNNs, but they often grapple with the complexity and scale of these networks. Our approach differs by translating the problem into the realm of natural language, thus potentially bypassing some of the direct complexities involved in analyzing the networks themselves. Further, existing DNN verification methods are restricted to simple robustness properties expressed in terms of input pixels while our approach enables semantic specification and verification of DNN models.

{\em \textbf{Multimodal Vision-Language Models (VLMs).}}
The development and application of multimodal VLMs have seen significant advancement, particularly models like CLIP \cite{radford2021learning} that process both text and images. These models offer a novel perspective for analyzing visual data, as they can interpret the relationships between images and text. 
This cross-modality relationship has been used for informal testing and diagnosis of vision models \cite{gao2023adaptive,eyuboglu2022domino,zhang2022diagnosing}.
Our approach is innovative in that it uses these capabilities to reason about vision models by interpreting concept representations, thereby enabling semantic formal analysis of models. Our approach is agnostic to the VLM used, and can be easily adapted to any VLM.

{\em \textbf{Concept Representation in Neural Networks.}}
The field of concept representation analysis in neural networks, as explored in \cite{Zhou_2018_ECCV, pmlr-v80-kim18d, DBLP:series/faia/YehKR21}, seeks to understand and extract interpretable features from the latent spaces of neural networks. While these approaches provide valuable insights, they often require extensive manual annotation and may struggle with entangled concepts. The disentangled concept learning~\cite{burgess2018understanding,cunningham2022principal} has also received attention recently but is limited to relatively low-dimensional data.  
By focusing on leveraging relative concept similarity, our approach side steps these challenges of decomposed concept learning and instead proposes a more scalable and automated approach to understanding concept representation in vision models. 
Our work builds on recent results on relationships between concept representation in vision and language models~\cite{pmlr-v80-kim18d,Zhou_2018_ECCV, crabbe2022concept,bai2023concept, nanda2023emergent,wang2023concept,park2023linear}.

{\em \textbf{Bridging DNN Embeddings and Natural Language.}}
Recent works like~\cite{moayeri2023text} have explored methods to correlate the embeddings of neural networks with natural language representations. This research is pivotal to our methodology, as it provides a basis for using VLMs to interpret and verify the properties of DNNs in natural language terms.

{\em \textbf{Scalability in Neural Network Verification.}}
The challenge of scalability in neural network verification is well-documented, and a number of techniques \cite{henriksen2019efficient, singh2019abstract} have been proposed to use abstraction-refinement to scale verification. Many existing methods suffer when scaling to networks with millions of parameters. Our approach aims to simplify this process by translating the verification into solving constraints within the common text/image representation space of VLMs, thus potentially offering a more scalable neuro-symbolic solution distinct from fuzzy-logic based neuro-symbolic methods \cite{donadello2017logic}.

In summary, our approach to leveraging VLMs for the formal analysis of vision-based DNNs synthesizes elements from the formal analysis of neural networks, concept representation analysis, multimodal language models, and scalable verification techniques. This integration of ideas offers a novel perspective that could address the longstanding challenges of formal analysis of DNNs.

\section{Conclusion}
\label{sec:conclusion}
We proposed the use of foundation vision-language models (VLMs) as
tools for concept-based analysis of vision-based DNNs.
We described a specification language, 
$\conspec$, to facilitate writing 
specifications in terms of human-understandable, natural-language descriptions of concepts, which are machine-checkable.
We illustrated our techniques on a ResNet classifier %
leveraging CLIP.

Our verification results serve as a demonstration that proofs are possible even for very large models such as ResNet or CLIP. However the results also indicate that properties hold only in small regions. This is akin to local robustness proofs. While violations may indicate real problems in the model and/or in the specifications, often, we found that they are due to the presence of noise in the input scope $B$. To address the issue, 
we experimented with different definitions of  $B$ and also introduced the {\em slack} variables to measure the {\em degree} of violations.

While in this paper we mainly focused on checking specifications as formal explanations for DNN decisions, in the future we plan to explore other uses of $\conspec$ specifications, e.g., as run-time checks to detect mis-classifications or adversarial attacks.
An important open challenge is formally defining $B$ to only include in-distribution inputs and avoid noise. One idea is to check the specifications at run-time; $\conspec$ may be particularly useful for run-time monitoring of DNNs in safety-critical settings. 
Another direction we intend to explore is to use metrics other than cosine similarity for comparing embeddings.
We also plan to experiment with more multimodal models and to assess the effectiveness of our techniques in safety-critical applications, which have clear definition of concepts.
\section*{Acknowledgements}

This work was supported in part by the
United States Air Force and DARPA under 
Contract No.FA8750-23-C-0519, %
and the U.S. Army Research Laboratory Cooperative Research
Agreement W911NF-17-2-0196. %
Any opinions, findings and conclusions or recommendations expressed in this
material are those of the authors and do not necessarily reflect the Department of Defense
or the United States Government.

\bibliographystyle{splncs04}
\bibliography{references}

\begin{thebibliography}{10}
\providecommand{\url}[1]{\texttt{#1}}
\providecommand{\urlprefix}{URL }
\providecommand{\doi}[1]{https://doi.org/#1}

\bibitem{bai2023concept}
Bai, A., Yeh, C.K., Lin, N.Y., Ravikumar, P.K., Hsieh, C.J.: Concept gradient:
  Concept-based interpretation without linear assumption. In: The Eleventh
  International Conference on Learning Representations (2023),
  \url{https://openreview.net/forum?id=_01dDd3f78}

\bibitem{bastani2016measuring}
Bastani, O., Ioannou, Y., Lampropoulos, L., Vytiniotis, D., Nori, A.,
  Criminisi, A.: Measuring neural net robustness with constraints. Advances in
  neural information processing systems  \textbf{29} (2016)

\bibitem{beland2020towards}
Beland, S., Chang, I., Chen, A., Moser, M., Paunicka, J., Stuart, D., Vian, J.,
  Westover, C., Yu, H.: Towards assurance evaluation of autonomous systems. In:
  Proceedings of the 39th International Conference on Computer-Aided Design.
  pp.~1--6 (2020)

\bibitem{bestuzheva2021scip}
Bestuzheva, K., Besançon, M., Chen, W.K., Chmiela, A., Donkiewicz, T., van
  Doornmalen, J., Eifler, L., Gaul, O., Gamrath, G., Gleixner, A., Gottwald,
  L., Graczyk, C., Halbig, K., Hoen, A., Hojny, C., van~der Hulst, R., Koch,
  T., Lübbecke, M., Maher, S.J., Matter, F., Mühmer, E., Müller, B.,
  Pfetsch, M.E., Rehfeldt, D., Schlein, S., Schlösser, F., Serrano, F.,
  Shinano, Y., Sofranac, B., Turner, M., Vigerske, S., Wegscheider, F.,
  Wellner, P., Weninger, D., Witzig, J.: The scip optimization suite 8.0 (2021)

\bibitem{Bommasani2021FoundationModels}
Bommasani, R., Hudson, D.A., Adeli, E., Altman, R., Arora, S., von Arx, S.,
  Bernstein, M.S., Bohg, J., Bosselut, A., Brunskill, E., Brynjolfsson, E.,
  Buch, S., Card, D., Castellon, R., Chatterji, N.S., Chen, A.S., Creel, K.A.,
  Davis, J., Demszky, D., Donahue, C., Doumbouya, M., Durmus, E., Ermon, S.,
  Etchemendy, J., Ethayarajh, K., Fei-Fei, L., Finn, C., Gale, T., Gillespie,
  L.E., Goel, K., Goodman, N.D., Grossman, S., Guha, N., Hashimoto, T.,
  Henderson, P., Hewitt, J., Ho, D.E., Hong, J., Hsu, K., Huang, J., Icard,
  T.F., Jain, S., Jurafsky, D., Kalluri, P., Karamcheti, S., Keeling, G.,
  Khani, F., Khattab, O., Koh, P.W., Krass, M.S., Krishna, R., Kuditipudi, R.,
  Kumar, A., Ladhak, F., Lee, M., Lee, T., Leskovec, J., Levent, I., Li, X.L.,
  Li, X., Ma, T., Malik, A., Manning, C.D., Mirchandani, S.P., Mitchell, E.,
  Munyikwa, Z., Nair, S., Narayan, A., Narayanan, D., Newman, B., Nie, A.,
  Niebles, J.C., Nilforoshan, H., Nyarko, J.F., Ogut, G., Orr, L.,
  Papadimitriou, I., Park, J.S., Piech, C., Portelance, E., Potts, C.,
  Raghunathan, A., Reich, R., Ren, H., Rong, F., Roohani, Y.H., Ruiz, C., Ryan,
  J., R'e, C., Sadigh, D., Sagawa, S., Santhanam, K., Shih, A., Srinivasan,
  K.P., Tamkin, A., Taori, R., Thomas, A.W., Tram{\`e}r, F., Wang, R.E., Wang,
  W., Wu, B., Wu, J., Wu, Y., Xie, S.M., Yasunaga, M., You, J., Zaharia, M.A.,
  Zhang, M., Zhang, T., Zhang, X., Zhang, Y., Zheng, L., Zhou, K., Liang, P.:
  On the opportunities and risks of foundation models. ArXiv  (2021),
  \url{https://crfm.stanford.edu/assets/report.pdf}

\bibitem{burgess2018understanding}
Burgess, C.P., Higgins, I., Pal, A., Matthey, L., Watters, N., Desjardins, G.,
  Lerchner, A.: Understanding disentangling in $\beta$-vae. arXiv preprint
  arXiv:1804.03599  (2018)

\bibitem{crabbe2022concept}
Crabb{\'e}, J., van~der Schaar, M.: Concept activation regions: A generalized
  framework for concept-based explanations. Advances in Neural Information
  Processing Systems  \textbf{35},  2590--2607 (2022)

\bibitem{cunningham2022principal}
Cunningham, E., Cobb, A.D., Jha, S.: Principal component flows. In:
  International Conference on Machine Learning. pp. 4492--4519. PMLR (2022)

\bibitem{imagenet}
Deng, J., Dong, W., Socher, R., Li, L.J., Li, K., Fei-Fei, L.: Imagenet: A
  large-scale hierarchical image database. In: 2009 IEEE Conference on Computer
  Vision and Pattern Recognition. pp. 248--255 (2009).
  \doi{10.1109/CVPR.2009.5206848}

\bibitem{donadello2017logic}
Donadello, I., Serafini, L., d'Avila Garcez, A.: Logic tensor networks for
  semantic image interpretation. In: IJCAI International Joint Conference on
  Artificial Intelligence. pp. 1596--1602. IJCAI (2017)

\bibitem{dosovitskiy2021an}
Dosovitskiy, A., Beyer, L., Kolesnikov, A., Weissenborn, D., Zhai, X.,
  Unterthiner, T., Dehghani, M., Minderer, M., Heigold, G., Gelly, S.,
  Uszkoreit, J., Houlsby, N.: An image is worth 16x16 words: Transformers for
  image recognition at scale. In: International Conference on Learning
  Representations (2021), \url{https://openreview.net/forum?id=YicbFdNTTy}

\bibitem{dutta2018output}
Dutta, S., Jha, S., Sankaranarayanan, S., Tiwari, A.: Output range analysis for
  deep feedforward neural networks. In: NASA Formal Methods Symposium. pp.
  121--138. Springer (2018)

\bibitem{esteva2021deep}
Esteva, A., Chou, K., Yeung, S., Naik, N., Madani, A., Mottaghi, A., Liu, Y.,
  Topol, E., Dean, J., Socher, R.: Deep learning-enabled medical computer
  vision. NPJ digital medicine  \textbf{4}(1), ~5 (2021)

\bibitem{eyuboglu2022domino}
Eyuboglu, S., Varma, M., Saab, K.K., Delbrouck, J.B., Lee-Messer, C., Dunnmon,
  J., Zou, J., Re, C.: Domino: Discovering systematic errors with cross-modal
  embeddings. In: International Conference on Learning Representations (2022),
  \url{https://openreview.net/forum?id=FPCMqjI0jXN}

\bibitem{gao2023adaptive}
Gao, I., Ilharco, G., Lundberg, S., Ribeiro, M.T.: Adaptive testing of computer
  vision models. In: Proceedings of the IEEE/CVF International Conference on
  Computer Vision. pp. 4003--4014 (2023)

\bibitem{gopinath2019property}
Gopinath, D., Converse, H., Pasareanu, C., Taly, A.: Property inference for
  deep neural networks. In: 2019 34th IEEE/ACM International Conference on
  Automated Software Engineering (ASE). pp. 797--809. IEEE (2019)

\bibitem{he2016deep}
He, K., Zhang, X., Ren, S., Sun, J.: Deep residual learning for image
  recognition. In: Proceedings of the IEEE conference on computer vision and
  pattern recognition. pp. 770--778 (2016)

\bibitem{henriksen2019efficient}
Henriksen, P., Lomuscio, A.: Efficient Neural Network Verification via Adaptive
  Refinement and Adversarial Search. Ph.D. thesis, Ph. D. Dissertation.
  Imperial College London (2019)

\bibitem{huang2017safety}
Huang, X., Kwiatkowska, M., Wang, S., Wu, M.: Safety verification of deep
  neural networks. In: Computer Aided Verification: 29th International
  Conference, CAV 2017, Heidelberg, Germany, July 24-28, 2017, Proceedings,
  Part I 30. pp. 3--29. Springer (2017)

\bibitem{janai2020computer}
Janai, J., G{\"u}ney, F., Behl, A., Geiger, A., et~al.: Computer vision for
  autonomous vehicles: Problems, datasets and state of the art. Foundations and
  Trends{\textregistered} in Computer Graphics and Vision  \textbf{12}(1--3),
  1--308 (2020)

\bibitem{katz2017reluplex}
Katz, G., Barrett, C., Dill, D.L., Julian, K., Kochenderfer, M.J.: Reluplex: An
  efficient smt solver for verifying deep neural networks. In: Computer Aided
  Verification: 29th International Conference, CAV 2017, Heidelberg, Germany,
  July 24-28, 2017, Proceedings, Part I 30. pp. 97--117. Springer (2017)

\bibitem{kaufmann2023champion}
Kaufmann, E., Bauersfeld, L., Loquercio, A., M{\"u}ller, M., Koltun, V.,
  Scaramuzza, D.: Champion-level drone racing using deep reinforcement
  learning. Nature  \textbf{620}(7976),  982--987 (2023)

\bibitem{pmlr-v80-kim18d}
Kim, B., Wattenberg, M., Gilmer, J., Cai, C., Wexler, J., Viegas, F., sayres,
  R.: Interpretability beyond feature attribution: Quantitative testing with
  concept activation vectors ({TCAV}). In: Dy, J., Krause, A. (eds.)
  Proceedings of the 35th International Conference on Machine Learning.
  Proceedings of Machine Learning Research, vol.~80, pp. 2668--2677. PMLR
  (10--15 Jul 2018), \url{https://proceedings.mlr.press/v80/kim18d.html}

\bibitem{moayeri2022comprehensive}
Moayeri, M., Pope, P., Balaji, Y., Feizi, S.: A comprehensive study of image
  classification model sensitivity to foregrounds, backgrounds, and visual
  attributes. In: Proceedings of the IEEE/CVF Conference on Computer Vision and
  Pattern Recognition. pp. 19087--19097 (2022)

\bibitem{moayeri2023text}
Moayeri, M., Rezaei, K., Sanjabi, M., Feizi, S.: Text-to-concept (and back) via
  cross-model alignment. In: International Conference on Machine Learning. pp.
  25037--25060. PMLR (2023)

\bibitem{nanda2023emergent}
Nanda, N., Lee, A., Wattenberg, M.: Emergent linear representations in world
  models of self-supervised sequence models. In: Proceedings of the 6th
  BlackboxNLP Workshop: Analyzing and Interpreting Neural Networks for NLP. pp.
  16--30 (2023)

\bibitem{park2023linear}
Park, K., Choe, Y.J., Veitch, V.: The linear representation hypothesis and the
  geometry of large language models. In: Causal Representation Learning
  Workshop at NeurIPS 2023 (2023)

\bibitem{pmlr-v139-radford21a}
Radford, A., Kim, J.W., Hallacy, C., Ramesh, A., Goh, G., Agarwal, S., Sastry,
  G., Askell, A., Mishkin, P., Clark, J., Krueger, G., Sutskever, I.: Learning
  transferable visual models from natural language supervision. In: Meila, M.,
  Zhang, T. (eds.) Proceedings of the 38th International Conference on Machine
  Learning. Proceedings of Machine Learning Research, vol.~139, pp. 8748--8763.
  PMLR (18--24 Jul 2021),
  \url{https://proceedings.mlr.press/v139/radford21a.html}

\bibitem{radford2021learning}
Radford, A., Kim, J.W., Hallacy, C., Ramesh, A., Goh, G., Agarwal, S., Sastry,
  G., Askell, A., Mishkin, P., Clark, J., et~al.: Learning transferable visual
  models from natural language supervision. In: International conference on
  machine learning. pp. 8748--8763. PMLR (2021)

\bibitem{unpublished2021clip}
Radford, A., Sutskever, I., Kim, J.W., Krueger, G., Agarwal, S.: Clip:
  Connecting text and images (2021)

\bibitem{singh2019abstract}
Singh, G., Gehr, T., P{\"u}schel, M., Vechev, M.: An abstract domain for
  certifying neural networks. Proceedings of the ACM on Programming Languages
  \textbf{3}(POPL),  1--30 (2019)

\bibitem{tjeng2018evaluating}
Tjeng, V., Xiao, K.Y., Tedrake, R.: Evaluating robustness of neural networks
  with mixed integer programming. In: International Conference on Learning
  Representations (2019), \url{https://openreview.net/forum?id=HyGIdiRqtm}

\bibitem{toledo2023deeper}
Toledo, F., Shriver, D., Elbaum, S., Dwyer, M.B.: Deeper notions of correctness
  in image-based dnns: Lifting properties from pixel to entities. In:
  Proceedings of the 31st ACM Joint European Software Engineering Conference
  and Symposium on the Foundations of Software Engineering. pp. 2122--2126
  (2023)

\bibitem{vaswani2017attention}
Vaswani, A., Shazeer, N., Parmar, N., Uszkoreit, J., Jones, L., Gomez, A.N.,
  Kaiser, {\L}., Polosukhin, I.: Attention is all you need. Advances in neural
  information processing systems  \textbf{30} (2017)

\bibitem{wang2023concept}
Wang, Z., Gui, L., Negrea, J., Veitch, V.: Concept algebra for (score-based)
  text-controlled generative models. In: Thirty-seventh Conference on Neural
  Information Processing Systems (2023)

\bibitem{DBLP:series/faia/YehKR21}
Yeh, C., Kim, B., Ravikumar, P.: Human-centered concept explanations for neural
  networks. In: Hitzler, P., Sarker, M.K. (eds.) Neuro-Symbolic Artificial
  Intelligence: The State of the Art, Frontiers in Artificial Intelligence and
  Applications, vol.~342, pp. 337--352. {IOS} Press (2021).
  \doi{10.3233/FAIA210362}, \url{https://doi.org/10.3233/FAIA210362}

\bibitem{zhang2022diagnosing}
Zhang, Y., HaoChen, J.Z., Huang, S.C., Wang, K.C., Zou, J., Yeung, S.:
  Diagnosing and rectifying vision models using language. In: The Eleventh
  International Conference on Learning Representations (2022)

\bibitem{Zhou_2018_ECCV}
Zhou, B., Sun, Y., Bau, D., Torralba, A.: Interpretable basis decomposition for
  visual explanation. In: Proceedings of the European Conference on Computer
  Vision (ECCV) (September 2018)

\end{thebibliography}

\newpage
\appendix

\section{Proofs}
\label{sec:app_proof}
Lem.~\ref{lem:eqv_verif_vals_1}, ~\ref{lem:eqv_verif_vals_2}, and Thm.~\ref{thm:eqv_verif} are all proven for a vision model $f:X\fun Y$ that can be decomposed into $\encode{f}:X\fun Z$ and $\head{f}:Z\fun Y$ where $Z:=\mathbb{R}^{d'}$, a $\conspec$ specification $e$, a linear concept representation map $\crep$ (as defined in Defn.~\ref{def:con_eqv}), and an input scope $B \subseteq X$. They use the notation $\widehat{\crep}$ for the function obtained from $\crep$ by replacing $\encode{f}$ with the identity function, $\encode{f}(B) := \{\encode{f}(x) \mid x \in B\}$, $\encode{f}^{-1}(v') := \{v \mid \encode{f}(v) = v'\}$ for the preimage under  $\encode{f}$ of a point $v'$, and IH for induction hypothesis.

A general observation that holds given the definitions of $\crep$ (as per Defn.~\ref{def:con_eqv}) and $\widehat{\crep}$ (all occurrences of $\encode{f}$ in $\crep$ replaced by the identity function) is that,
\begin{equation}
\label{eqn:lem}
\forall con \in \Concepts. \forall v \in X.~\crep(con)(v) = \widehat{\crep}(con)(\encode{f}(v)).
\end{equation}

\begin{lemma}
\label{lem:eqv_verif_vals_1}
$$
\forall v \in B.~\sem{e}(f,\crep,v) =  \sem{e}(\head{f},\widehat{\crep},\encode{f}(v))
$$
\end{lemma}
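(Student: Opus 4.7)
The plan is to proceed by structural induction on the $\conspec$ expression $e$, using the syntax in Fig.~\ref{fig:lang_syn}. The key identity underlying the whole argument is equation~\eqref{eqn:lem}, namely $\crep(con)(v) = \widehat{\crep}(con)(\encode{f}(v))$, which holds by construction of $\widehat{\crep}$ (it is obtained from $\crep$ of Defn.~\ref{def:con_eqv} by swapping $\encode{f}$ for the identity), together with the compositional identity $f = \head{f} \circ \encode{f}$.

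\textbf{Base cases.} For the strength predicate, I would unfold the semantics:
\[
\sem{\strength(x,con_1,con_2)}(f,\crep,v) \;=\; \crep(con_1)(v) > \crep(con_2)(v).
\]
Applying~\eqref{eqn:lem} to both sides of the inequality rewrites this as $\widehat{\crep}(con_1)(\encode{f}(v)) > \widehat{\crep}(con_2)(\encode{f}(v))$, which is exactly $\sem{\strength(x,con_1,con_2)}(\head{f},\widehat{\crep},\encode{f}(v))$. For the output predicate, I would unfold to $\argmax(f(v)) = \{c\}$, use $f(v) = \head{f}(\encode{f}(v))$ to rewrite it as $\argmax(\head{f}(\encode{f}(v))) = \{c\}$, and conclude it equals $\sem{\outl(x,c)}(\head{f},\widehat{\crep},\encode{f}(v))$; note that $\widehat{\crep}$ plays no role here since $\outl$ does not reference concept representations.

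\textbf{Inductive cases.} For $\neg e'$, $e_1 \wedge e_2$, and $e_1 \vee e_2$, the semantics in Fig.~\ref{fig:lang_sem} distribute over the logical connectives. I would apply the induction hypothesis to each subexpression ($e'$ for negation; $e_1$ and $e_2$ for the binary connectives), then recombine using the same connective on the right-hand side. For instance, for conjunction: $\sem{e_1 \wedge e_2}(f,\crep,v) = \sem{e_1}(f,\crep,v) \wedge \sem{e_2}(f,\crep,v)$, which by IH equals $\sem{e_1}(\head{f},\widehat{\crep},\encode{f}(v)) \wedge \sem{e_2}(\head{f},\widehat{\crep},\encode{f}(v)) = \sem{e_1 \wedge e_2}(\head{f},\widehat{\crep},\encode{f}(v))$.

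\textbf{Main obstacle.} There is no real difficulty here; both base cases are essentially one-line unfoldings and the inductive cases are purely structural. The only subtle point worth flagging is the first base case, where I must confirm that $\widehat{\crep}$ is indeed well-defined on $\encode{f}(v)$ (i.e., that the substitution of identity for $\encode{f}$ in Defn.~\ref{def:con_eqv} yields a function of type $Z \fun \mathbb{R}$ applicable to embedding vectors), and that equation~\eqref{eqn:lem} applies uniformly for every concept name appearing in $e$. Once this is laid out, the lemma follows immediately and will serve as the pointwise counterpart needed to establish Thm.~\ref{thm:eqv_verif} via quantification over $B$ and $\encode{f}(B)$ in the follow-up Lem.~\ref{lem:eqv_verif_vals_2}.
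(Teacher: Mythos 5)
Your proposal matches the paper's proof essentially verbatim: both proceed by structural induction on $e$, both base cases rest on the identity $\crep(con)(v) = \widehat{\crep}(con)(\encode{f}(v))$ (the paper's Eqn.~\ref{eqn:lem}) and on $f = \head{f}\circ\encode{f}$, and the inductive cases distribute the semantics over the connectives exactly as you describe. No gaps; the argument is correct.
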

\begin{proof}

Our proof is by induction on the syntactic structure of specification $e$. 

\paragraph{\underline{Base case for $\strength(x,con_1,con_2)$:}}

\noindent For any $v \in B$, we have,

$\sem{\strength(x,con_1,con_2)}(f,\crep,v)$

$ =  \crep(con_1)(v) > \crep(con_2)(v)$

$= \widehat{\crep}(con_1)(\encode{f}(v)) > \widehat{\crep}(con_2)(\encode{f}(v))$ \hfill(Eqn.~\ref{eqn:lem})

$= \sem{\strength(x,con_1,con_2)}(\head{f},\widehat{\crep},\encode{f}(v)) $

\paragraph{\underline{Base case for $\outl(x,\labell)$:}}
\noindent For any $v \in B$, we have,

$\sem{\outl(x,\labell)}(f,\crep,v)$

$= (\argmax(f(v))=\{\labell\})$

$= (\argmax(\head{f}(\encode{f}(v))=\{\labell\})$

$= \sem{\outl(x,\labell)}(\head{f},\widehat{\crep},\encode{f}(v))$

\paragraph{\underline{Inductive case for $\neg e$:}}
For any $v \in B$, we have,

$\sem{\neg e}(f,\crep,v)$

$= \neg\sem{e}(f,\crep,v)$

$= \neg\sem{e}(\head{f},\widehat{\crep},\encode{f}(v))$\hfill (IH for $e$)

$= \sem{\neg e}(\head{f},\widehat{\crep}, \encode{f}(v))$

\paragraph{\underline{Inductive case for $e_1 \wedge e_2$:}}
For any $v \in B$, we have,

$\sem{e_1 \wedge e_2}(f,\crep,v)$

$= \sem{e_1}(f,\crep,v) \wedge \sem{e_2}(f,\crep,v)$

$= \sem{e_1}(\head{f},\widehat{\crep},\encode{f}(v)) \wedge \sem{e_2}(\head{f},\widehat{\crep},\encode{f}(v))$ \hfill (IH for $e_1,e_2$)

$= \sem{e_1 \wedge e_2}(\head{f},\widehat{\crep}, \encode{f}(v))$

\paragraph{\underline{Inductive case for $e_1 \vee e_2$:}}
For any $v \in B$, we have,

$\sem{e_1 \vee e_2}(f,\crep,v)$

$= \sem{e_1}(f,\crep,v) \vee \sem{e_2}(f,\crep,v)$

$= \sem{e_1}(\head{f},\widehat{\crep},\encode{f}(v)) \vee \sem{e_2}(\head{f},\widehat{\crep},\encode{f}(v))$ \hfill (IH for $e_1,e_2$)

$= \sem{e_1 \vee e_2}(\head{f},\widehat{\crep}, \encode{f}(v))$

$\qed$

\end{proof}

\begin{lemma}
\label{lem:eqv_verif_vals_2}
$$
\forall v' \in \encode{f}(B).\forall v \in \encode{f}^{-1}(v').~\sem{e}(\head{f},\widehat{\crep},v') = \sem{e}((f,\crep,v)
$$
\end{lemma}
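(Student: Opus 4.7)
The plan is to carry out the same structural induction on the specification $e$ as in Lemma~\ref{lem:eqv_verif_vals_1}, after observing that Lemma~\ref{lem:eqv_verif_vals_1}'s proof never actually makes use of the hypothesis $v \in B$; the restriction to $B$ was only needed to match the quantifier in its statement. Accordingly, I would first prove the strengthened pointwise identity
\[
\forall v \in X.\quad \sem{e}(f,\crep,v) \;=\; \sem{e}(\head{f},\widehat{\crep},\encode{f}(v)),
\]
and then Lemma~\ref{lem:eqv_verif_vals_2} will follow immediately: for any $v' \in \encode{f}(B)$ and any $v \in \encode{f}^{-1}(v')$, we have $\encode{f}(v) = v'$ by definition of the preimage, so substituting yields $\sem{e}(f,\crep,v) = \sem{e}(\head{f},\widehat{\crep},v')$, as desired.

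The induction itself proceeds exactly as in the previous lemma. For the $\strength(x,con_1,con_2)$ base case, the pointwise identity $\crep(con)(v) = \widehat{\crep}(con)(\encode{f}(v))$ recorded in Equation~\ref{eqn:lem}, which is already stated for all $v \in X$, allows us to rewrite $\crep(con_i)(v)$ as $\widehat{\crep}(con_i)(\encode{f}(v))$ for $i=1,2$, and the strict inequality defining the predicate is preserved verbatim. For the $\outl(x,\labell)$ base case, the key is the factorization $f = \head{f} \circ \encode{f}$, which gives $\argmax(f(v)) = \argmax(\head{f}(\encode{f}(v)))$; none of this uses $v \in B$. The three inductive cases for $\neg$, $\wedge$, and $\vee$ are unchanged, simply invoking the inductive hypothesis on each subformula and then applying the definitional clauses of $\sem{\cdot}$ on both sides.

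The only point that needs care is the bookkeeping observation that $v$ is allowed to range over all of $X$, not merely over $B$: the semantics of $\sem{e}(f,\crep,v)$ depends on $v$ only through $\encode{f}(v)$, so any two fibers over the same $v'$ agree in truth value. I do not anticipate a real obstacle, since Lemma~\ref{lem:eqv_verif_vals_2} is essentially Lemma~\ref{lem:eqv_verif_vals_1} re-read along the fibers of the encoder; the only conceptual content is that both $\crep$ (by Equation~\ref{eqn:lem}) and $f$ (by its decomposition) factor through $\encode{f}$, and both of these facts are already available.
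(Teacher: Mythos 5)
Your proposal is correct and follows essentially the same route as the paper: the paper also proves Lemma~\ref{lem:eqv_verif_vals_2} by structural induction on $e$, using Eqn.~\ref{eqn:lem} for the $\strength$ case and the factorization $f = \head{f}\circ\encode{f}$ for the $\outl$ case, with $v'$ rewritten as $\encode{f}(v)$ for $v \in \encode{f}^{-1}(v')$. Your reorganization --- first establishing the pointwise identity for all $v \in X$ and then substituting along the fibers --- is a mild streamlining (it unifies the two lemmas into one induction) but introduces no new ideas and no gaps.
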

\begin{proof}
Our proof is by induction on the syntactic structure of specification $e$. 

\paragraph{\underline{Base case for $\strength(x,con_1,con_2)$:}}
\noindent For any $v' \in \encode{f}(B)$, we have,

$\sem{\strength(x,con_1,con_2)}(\head{f},\widehat{\crep},v')$

$ =  \widehat{\crep}(con_1)(v') > \widehat{\crep}(con_2)(v')$

$= \widehat{\crep}(con_1)(\encode{f}(v)) > \widehat{\crep}(con_1)(\encode{f}(v)),~\forall v \in \encode{f}^{-1}(v')$ \hfill(Defn. of $\encode{f}^{-1}(v')$)

$= \crep(con_1)(v) > \crep(con_2)(v),~\forall v \in \encode{f}^{-1}(v')$ \hfill(Eqn.~\ref{eqn:lem})

$= \sem{\strength(x,con_1,con_2)}(f,\crep,v),~\forall v \in \encode{f}^{-1}(v')$

\paragraph{\underline{Base case for $\outl(x,\labell)$:}}
\noindent For any $v' \in \encode{f}(B)$, we have,

$\sem{\outl(x,\labell)}(\head{f},\widehat{\crep},v')$

$= (\argmax(\head{f}(v'))=\{\labell\})$

$= (\argmax(\head{f}(\encode{f}(v))=\{\labell\}),~\forall v \in \encode{f}^{-1}(v')$ \hfill(Defn. of $\encode{f}^{-1}(v')$)

$= (\argmax(f(v))=\{\labell\}),~\forall v \in \encode{f}^{-1}(v')$ \hfill($f = \head{f}\circ\encode{f}$)

$= \sem{\outl(x,\labell)}(f,\crep,v), ~\forall v \in \encode{f}^{-1}(v')$

\paragraph{\underline{Inductive case for $\neg e$:}}
For any $v' \in \encode{f}(B)$, we have,

$\sem{\neg e}(\head{f},\widehat{\crep},v')$

$= \neg\sem{e}(\head{f},\widehat{\crep},v')$

$= \neg\sem{e}(f,\crep,v),~\forall v \in \encode{f}^{-1}(v')$\hfill (IH for $e$)

$= \sem{\neg e}(f,\crep,v),~\forall v \in \encode{f}^{-1}(v')$

\paragraph{\underline{Inductive case for $e_1 \wedge e_2$:}}
For any $v' \in \encode{f}(B)$, we have,

$\sem{e_1 \wedge e_2}(\head{f},\widehat{\crep},v')$

$= \sem{e_1}(\head{f},\widehat{\crep},v') \wedge \sem{e_2}(\head{f},\widehat{\crep},v')$

$= \sem{e_1}(f,\crep,v) \wedge \sem{e_2}(f,\crep,v),~\forall v \in \encode{f}^{-1}(v')$ \hfill (IH for $e_1,e_2$)

$= \sem{e_1 \wedge e_2}(f,\crep,v),~\forall v \in \encode{f}^{-1}(v')$

\paragraph{\underline{Inductive case for $e_1 \vee e_2$:}}
For any $v' \in \encode{f}(B)$, we have,

$\sem{e_1 \vee e_2}(\head{f},\widehat{\crep},v')$

$= \sem{e_1}(\head{f},\widehat{\crep},v') \vee \sem{e_2}(\head{f},\widehat{\crep},v')$

$= \sem{e_1}(f,\crep,v) \vee \sem{e_2}(f,\crep,v),~\forall v \in \encode{f}^{-1}(v')$ \hfill (IH for $e_1,e_2$)

$= \sem{e_1 \vee e_2}(f,\crep,v),~\forall v \in \encode{f}^{-1}(v')$

$\qed$
\end{proof}

\noindent\textbf{Theorem~\ref{thm:eqv_verif}.}
$$
(f,\crep,B) \models e \Leftrightarrow (\head{f},\widehat{\crep},\encode{f}(B)) \models e
$$
\begin{proof}
\noindent We use the simple observation that for functions $h_1:X_1\fun X_2$, $h_2: X_2 \fun \{\truel,\falsel\}$ and $B \subseteq X_1$,
\begin{equation}
\label{eqn:th2:1}
(\forall v \in B.~h_2(h_1(v))=\truel) \Rightarrow (\forall v' \in h_1(B).~h_2(v')=\truel)
\end{equation}

\noindent We first prove the direction $(f,\crep,B) \models e \Rightarrow (\head{f},\widehat{\crep},\encode{f}(B)) \models e$.  We have that,

$(f,\crep,B) \models e$

$\forall v \in B.~\sem{e}(f,\crep,v)=\truel$

$\forall v \in B.~\sem{e}(\head{f},\widehat{\crep},\encode{f}(v))=\truel$ \hfill (Lem.~\ref{lem:eqv_verif_vals_1})

$\forall v' \in \encode{f}(B).~\sem{e}(\head{f},\widehat{\crep},v')=\truel$ \hfill (Eqn.~\ref{eqn:th2:1})

$(\head{f},\widehat{\crep},\encode{f}(B)) \models e$\\

\noindent We next prove the other direction, $(\head{f},\widehat{\crep},\encode{f}(B)) \models e \Rightarrow  (f,\crep,B) \models e$.  We have that,

$(\head{f},\widehat{\crep},\encode{f}(B)) \models e$

$\forall v' \in \encode{f}(B).~\sem{e}(\head{f},\widehat{\crep},v') = \truel$

$\forall v' \in \encode{f}(B).\forall v \in \encode{f}^{-1}(v').~\sem{e}((f,\crep,v) = \truel$ \hfill (Lem.~\ref{lem:eqv_verif_vals_2})

$\forall v \in B. ~\sem{e}((f,\crep,v) = \truel$ \hfill(since $\{v \mid v' \in \encode{f}(B) \wedge v \in \encode{f}^{-1}(v')\} = B$)

$(f,\crep,B) \models e$ 

$\qed$
\end{proof}

We state a theorem similar to Thm.~\ref{thm:eqv_verif} for a VLM model $g$.
\begin{theorem}
\label{thm:eqv_verif_VLM}
Given a VLM $g$ with encoders $\encode{g}^{img}:X\fun Z$ and $\encode{g}^{txt}:T\fun Z$ and head $\head{g}:Z\fun Y$ where $Z:=\mathbb{R}^{d'}$, a $\conspec$ specification $e$, a linear concept representation map $\crep$ (as defined in Defn.~\ref{def:con_rep_vlm}), and an input scope $B \subseteq X$,
$$
((\head{g}\circ\encode{g}^{img}),\crep,B) \models e \Leftrightarrow (\head{f},\widehat{\crep},\encode{g}(B)) \models e
$$
where $\widehat{\crep}$ is obtained from $\crep$ by replacing $\encode{g}^{img}$ with the identity function
\end{theorem}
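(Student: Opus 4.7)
The plan is to mirror the proof of Theorem~\ref{thm:eqv_verif} almost verbatim, since the structural argument is independent of whether the underlying classifier is a plain vision model or a VLM composed with a zero-shot head. The only model-specific ingredient of that proof is the pointwise identity between $\crep$ and $\widehat{\crep}\circ\encode{f}$ (Eqn.~\ref{eqn:lem}), which was derived by inspection of Defn.~\ref{def:con_eqv}. So the first step is to establish the analogous identity in the VLM setting: for $\crep$ as in Defn.~\ref{def:con_rep_vlm} and $\widehat{\crep}$ obtained by replacing $\encode{g}^{img}$ with the identity, we have
$$
\forall con \in \Concepts.\ \forall v \in X.\ \crep(con)(v) \;=\; \cos(\encode{g}^{img}(v),\condir{con}) \;=\; \widehat{\crep}(con)(\encode{g}^{img}(v)).
$$
This is immediate from the definitions and requires no further computation.

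Next, I would restate and prove the VLM analogs of Lem.~\ref{lem:eqv_verif_vals_1} and Lem.~\ref{lem:eqv_verif_vals_2}, namely that for every $v \in B$,
$\sem{e}(\head{g}\circ\encode{g}^{img},\crep,v) = \sem{e}(\head{g},\widehat{\crep},\encode{g}^{img}(v))$,
and symmetrically for every $v' \in \encode{g}^{img}(B)$ and every $v \in (\encode{g}^{img})^{-1}(v')$,
$\sem{e}(\head{g},\widehat{\crep},v') = \sem{e}(\head{g}\circ\encode{g}^{img},\crep,v)$. Both proofs proceed by structural induction on the syntax of $e$. The strength base case uses the identity above in place of Eqn.~\ref{eqn:lem}; the $\outl$ base case uses only the definitional equality $(\head{g}\circ\encode{g}^{img})(v) = \head{g}(\encode{g}^{img}(v))$; and the inductive cases for $\neg, \wedge, \vee$ are entirely syntactic and carry over unchanged from the original proofs.

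With both lemmas in hand, the theorem follows by the same two-step chase used for Theorem~\ref{thm:eqv_verif}. For the forward direction, pointwise truth on $B$ combined with the first lemma lifts to pointwise truth on $\encode{g}^{img}(B)$ via the elementary observation Eqn.~\ref{eqn:th2:1}. For the reverse direction, pointwise truth on $\encode{g}^{img}(B)$ together with the second lemma gives truth on every preimage, and since $\{v \mid v' \in \encode{g}^{img}(B),\ v \in (\encode{g}^{img})^{-1}(v')\} = B$, we recover $((\head{g}\circ\encode{g}^{img}),\crep,B)\models e$.

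I do not anticipate a real obstacle: there is no new mathematical content beyond Theorem~\ref{thm:eqv_verif}, only a change of names ($\encode{f}\leadsto\encode{g}^{img}$, $\head{f}\leadsto\head{g}$, $f\leadsto\head{g}\circ\encode{g}^{img}$) and a swap of Defn.~\ref{def:con_eqv} for Defn.~\ref{def:con_rep_vlm} at the single point where the concept map enters. The mildly delicate bookkeeping is ensuring that $\widehat{\crep}$ is interpreted consistently in both lemmas so that the structural induction composes cleanly; beyond that, the proof is a routine adaptation.
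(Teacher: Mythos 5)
Your proposal is correct and matches the paper's intent exactly: the paper's own proof of this theorem is literally the one-line remark that it is ``similar to the proof of Thm.~\ref{thm:eqv_verif}'', and your write-up is the faithful elaboration of that remark, substituting $\encode{g}^{img}$ for $\encode{f}$, $\head{g}$ for $\head{f}$, and Defn.~\ref{def:con_rep_vlm} for Defn.~\ref{def:con_eqv} at the one place the concept map is used. No gaps.
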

\begin{proof}
Similar to proof for Thm.~\ref{thm:eqv_verif}.$\qed$
\end{proof}

\section{Captions Used for Computing CLIP Text Embeddings}
\label{sec:app_captions}
We use the following set of caption templates to generate captions that refer to concepts or classes. The resulting captions are then passed through CLIP's text encoder to generate text embedding. The actual captions are generated by replacing $\{\}$ with a class or concept name.

\begin{verbatim}
caption_templates = [
    'a bad photo of a {}.',
    'a photo of many {}.',
    'a photo of the hard to see {}.',
    'a low resolution photo of the {}.',
    'a rendering of a {}.',
    'a bad photo of the {}.',
    'a cropped photo of the {}.',
    'a photo of a hard to see {}.',
    'a bright photo of a {}.',
    'a photo of a clean {}.',
    'a photo of a dirty {}.',
    'a dark photo of the {}.',
    'a drawing of a {}.',
    'a photo of my {}.',
    'a photo of the cool {}.',
    'a close-up photo of a {}.',
    'a black and white photo of the {}.',
    'a painting of the {}.',
    'a painting of a {}.',
    'a pixelated photo of the {}.',
    'a bright photo of the {}.',
    'a cropped photo of a {}.',
    'a photo of the dirty {}.',
    'a jpeg corrupted photo of a {}.',
    'a blurry photo of the {}.',
    'a photo of the {}.',
    'a good photo of the {}.',
    'a rendering of the {}.',
    'a {} in an image.',
    'a photo of one {}.',
    'a doodle of a {}.',
    'a close-up photo of the {}.',
    'a photo of a {}.',
    'the {} in an image.',
    'a sketch of a {}.',
    'a doodle of the {}.',
    'a low resolution photo of a {}.',
    'a photo of the clean {}.','a photo of a large {}.',
    'a photo of a nice {}.',
    'a photo of a weird {}.',
    'a blurry photo of a {}.',
    'a cartoon {}.',
    'art of a {}.',
    'a sketch of the {}.',
    'a pixelated photo of a {}.',
    'a jpeg corrupted photo of the {}.',
    'a good photo of a {}.',
    'a photo of the nice {}.',
    'a photo of the small {}.',
    'a photo of the weird {}.',
    'the cartoon {}.',
    'art of the {}.',
    'a drawing of the {}.',
    'a photo of the large {}.',
    'a black and white photo of a {}.',
    'a dark photo of a {}.',
    'a photo of a cool {}.',
    'a photo of a small {}.',
    'a photo containing a {}.',
    'a photo containing the {}.',
    'a photo with a {}.',
    'a photo with the {}.',
    'a photo containing a {} object.',
    'a photo containing the {} object.',
    'a photo with a {} object.',
    'a photo with the {} object.',
    'a photo of a {} object.',
    'a photo of the {} object.',
    ]

\end{verbatim}

\section{Verification of CLIP}
\label{sec:app_clip}

For this experiment, we use the CLIP model $g$ itself as a classifier by adding a zero-shot classification head $\head{g}$ in the manner described in Section~\ref{sec:prelim}. For each class $\labell$, we define a region (or scope) $\overline{B}$ in the embedding space of images. Our goal is to check if the specifications formulated using the statistically significant strength predicates for a class $\labell$ of the form %
$\outl(\labell) \implies con_1\strength con_2$ hold in the region $\overline{B}$.

\paragraph{\textbf{Focus regions.}} As a proof of concept, we experimented with defining for each class $\labell$, a region $\overline{B}$ so as to approximate the set of in-distribution inputs corresponding to that class. For each image $x$ of class $\labell$ in the RIVAL10 train set ($\Dtrain$), we compute its embedding $z$, $z=\encode{g}^{img}(x)$. Then we compute the mean, $\mu_i$, and the standard deviation, $\sigma_i$, for each feature (or dimension) $i$ of the embedding space. The region $\overline{B}$ is then defined as $\overline{B} := [[l_1,u_1], \ldots, [l_p, u_p]]$, where $l_i =\mu_i  - \gamma \sigma_i$, $u_i =\mu_i  + \gamma \sigma_i$ and $\gamma$ is a parameter. Such a region is intended to contain a large number of embeddings that correspond to in-distribution images of class $\labell$. %

To validate this conjecture, we ran CLIP-guided diffusion process that generates images from CLIP embeddings.
In particular, we used 
the $\mu = [\mu_1,\ldots, \mu_p]$ as embedding for each class $\labell$ and employed the CLIP-guided diffusion toolbox~\cite{unpublished2021clip} to generate the corresponding images.
As the process is non-deterministic, we get a different sample each time we run the diffusion process. Figure~\ref{fig:clip:cars} shows a few examples for the class \textit{car}. While these images are not perfect, they have recognizable car attributes, suggesting that our regions $\overline{B}$ indeed manage to embeddings of in-distribution inputs.
\begin{figure}
    \centering
    \begin{minipage}{0.24\textwidth}
        \centering
        \includegraphics[width=0.99\textwidth]{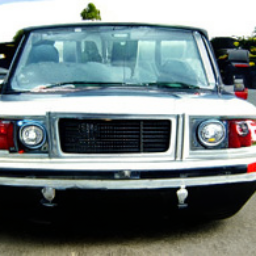} %
    \end{minipage}\hfill
    \begin{minipage}{0.24\textwidth}
        \centering
        \includegraphics[width=0.99\textwidth]{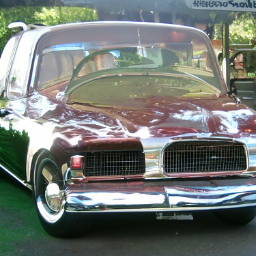} %
    \end{minipage}
    \begin{minipage}{0.24\textwidth}
        \centering
        \includegraphics[width=0.99\textwidth]{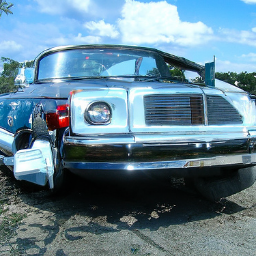} %
    \end{minipage}\hfill
    \begin{minipage}{0.24\textwidth}
        \centering
        \includegraphics[width=0.99\textwidth]{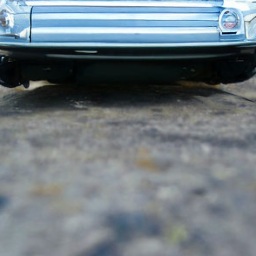} %
    \end{minipage}
    \caption{Examples of images generated using CLIP-guided stable diffusion from $\mu$ for the class \textit{car}.}\label{fig:clip:cars}
    \vspace{-0.8cm}
\end{figure}

\paragraph{\textbf{Encoding the verification problem.}}
We want to formally check if the CLIP model $(\head{g}\circ\encode{g}^{img})$ satisfies a specification $e$ with respect to a concept representation map $\crep$ and an input scope $B$ (i.e., $(\head{g}\circ\encode{g}^{img},\crep,B)\models e$) which, using Thm.~\ref{thm:eqv_verif_VLM}, can be rephrased as 
$(\head{g},\widehat{\crep},\overline{B})\models e$).

We use the zero-shot classification method as described in Section~\ref{sec:prelim} to define $\head{g}$ for determining the most probable class. We compute the mean text embedding $q^{\labell_j}$ for each class $\labell_j$ (using the CLIP text encoder $\encode{g}^{txt}$ and the set of captions listed in Appendix~\ref{sec:app_captions}) and calculate $\cosine$ similarity between an input image embedding and these classes' text embeddings. The closest class embedding is the predicted class. 
We similarly compute the embedding corresponding to each concept ${con_i}$, denoted $q^{con_i}$ here and as $\condir{con_i}$ in Defn.~\ref{def:con_rep_vlm}, in CLIP's embedding space. These embeddings represent the direction corresponding to the concepts and are used to define $\widehat{\crep}$. 

Given a specification $\outl(\labell) \implies con_1 \strength con_2$, we encode the verification problem with the negated specification, i.e., $\outl(\labell) \wedge \neg(con_1 \strength con_2)$ as follows:

\begin{align}
l_i\leq z_i \leq u_i,  l_i = \mu_i  - \gamma \sigma_i,  u_i=\mu_i  + \gamma \sigma_i, &\qquad  \forall z_i, i \in \{1,\ldots, p\},\label{eq:clip:one:box} \\ 
\sum_i \frac{z_i}{\lVert z 	\rVert}
\frac{q^{\labell}_i}{\lVert q^{\labell} 	\rVert} >  \sum_i \frac{z_i}{\lVert z 	\rVert}
\frac{q^{\labell_j}_i}{\lVert q^{\labell_j} 	\rVert}, &\qquad \forall \labell_j \neq \labell \label{eq:clip:one:class}\\
\sum_i \frac{z_i}{\lVert z \rVert} \frac{q_i^{con_2}}{\lVert q^{con_2} 	\rVert} > \sum_i \frac{z_i}{\lVert z \rVert}
\frac{q^{con_1}_i}{\lVert q^{con_1} 	\rVert}.  & \label{eq:clip:one:rel}
\end{align}

In the encoding, we use $z_i$ for the variable encoding the $i$th feature (or dimension) of embedding vector $z$.
Equation~\eqref{eq:clip:one:box} encodes the region $\overline{B}$ that we focus on and  $\gamma \in \{0.25,1,2\}$ in our experiments.
Equation~\eqref{eq:clip:one:class} encodes the zero-shot classifier ($\head{g}$) to ensure that the predicted output is class $\labell$ and Equation~\eqref{eq:clip:one:rel} encodes $\neg(con_1 \strength con_2)$. 

Note that Equations~\eqref{eq:clip:one:box}--\eqref{eq:clip:one:rel} form a non-linear model as we have $\lVert z 	\rVert$ in two sets of constraints. However, we can cancel them out and simplify them to linear constraints:
\begin{align}
\sum_i {z_i}
\frac{q^{\labell_i}}{\lVert q^{\labell} 	\rVert} >  \sum_i {z_i}
\frac{q^{\labell_j}_i}{\lVert q^{\labell_j} 	\rVert}, &\qquad \forall \labell_j \neq \labell \label{eq:clip:one:simpl:class}\\
\sum_i {z_i} \frac{q_i^{con_2}}{\lVert q^{con_2} 	\rVert} > \sum_i {z_i}
\frac{q^{con_1}_i}{\lVert q^{con_1} 	\rVert}.  & \label{eq:clip:one:simpl:rel}
\end{align}

Our results show that for considered regions $\overline{B}$, the Equations~\eqref{eq:clip:one:box},\eqref{eq:clip:one:simpl:class},\eqref{eq:clip:one:simpl:rel} are satisfiable signaling that we can find a counterexample in all cases. This potentially indicates either a problem in the CLIP model or an overly permissive definition of the input scope $B$.
To understand these results better, we reformulate the problem to an optimization problem to investigate how `strongly' we can violate the predicate $con_1 \strength con_2$. We add a slack variable in Equation~\eqref{eq:clip:one:simpl:rel} that we maximize:
\begin{align}
\sum_i {z_i} \frac{q_i^{con_2}}{\lVert q^{con_2} 	\rVert} > \varepsilon + \sum_i {z_i}
\frac{q^{con_1}_i}{\lVert q^{con_1} 	\rVert} \label{eq:clip:one:simpl:opt:rel}
\end{align}

\paragraph{\textbf{Results.}}
Figures~\ref{fig:clip:truck}--\ref{fig:clip:car} show results for our experiments where the predicted classes are \textit{truck} and \textit{car}, respectively. For each specification, we define three variants of $B$ with $\gamma \in \{0.25,1,2\}$,
and solve the optimization problem. As the size of the region increases, the strength of the violation grows significantly. For a small region, where $\gamma = 0.25\sigma$, the amount of violation is around 0.3
and it grows to about 6 when $\gamma = 2\sigma$.
We conclude that while we cannot prove that the properties formally hold, we observe the for smaller regions in the embedding space the violation is relatively low and suggests that we have to consider checking a soft version of the $con_1 \strength con_2$ predicate.

\begin{figure}[t]
    \centering
    \begin{minipage}{0.49\textwidth}
        \centering
        \includegraphics[width=0.99\textwidth]{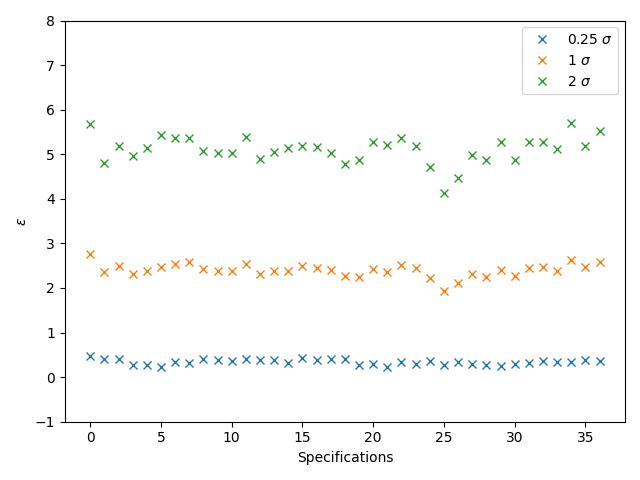} %
        \caption{The value of specification's violation measure $\varepsilon$ for  \textit{truck} (37 specs)\label{fig:clip:truck}}
    \end{minipage}\hfill
    \begin{minipage}{0.49\textwidth}
        \centering
        \includegraphics[width=0.99\textwidth]{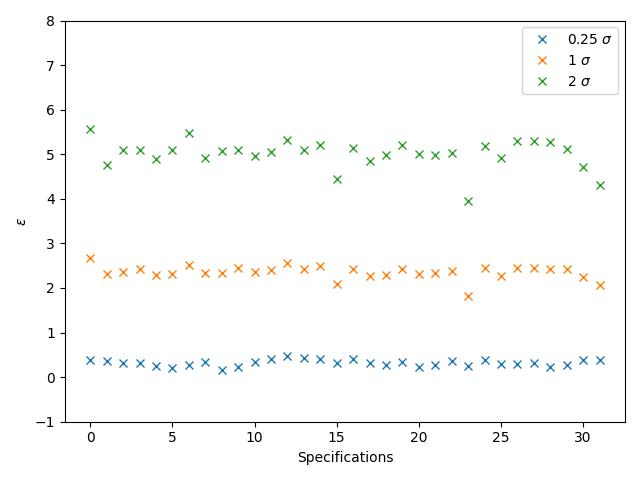} %
        \caption{The value of specification's violation measure $\varepsilon$ for \textit{car} (32 specs)\label{fig:clip:car}}
    \end{minipage}
\end{figure}

\end{document}